\definecolor{tempcolor}{rgb}{1,0.2,0.3}
\definecolor{myblue}{RGB}{80,80,160}
\definecolor{mygreen}{RGB}{80,160,80}
\crefname{hypothesis}{Hypothesis}{Hypotheses}
\Crefname{ALC@unique}{Line}{Lines}
\colorlet{texcscolor}{blue!50!black}
\colorlet{texemcolor}{red!70!black}
\colorlet{texpreamble}{red!70!black}
\colorlet{codebackground}{black!25!white!25}
\lstdefinestyle{siamlatex}{%
  style=tcblatex,
  texcsstyle=*\color{texcscolor},
  texcsstyle=[2]\color{texemcolor},
  keywordstyle=[2]\color{texemcolor},
  moretexcs={cref,Cref,maketitle,mathcal,text,headers,email,url},
}
\newtheorem{defi}{Definition}[section]
\newtheorem{ex}[defi]{Example}
\DeclareTotalTCBox{\code}{ v O{} }
{ 
  fontupper=\ttfamily\color{black},
  nobeforeafter,
  tcbox raise base,
  colback=codebackground,colframe=white,
  top=0pt,bottom=0pt,left=0mm,right=0mm,
  leftrule=0pt,rightrule=0pt,toprule=0mm,bottomrule=0mm,
  boxsep=0.5mm,
  #2}{#1}
\def\F{{\mathbf F}}
\def\b{{\mathbf b}}
\def\x{{\mathbf x}}
\def\ve{{\mathbf v}}
\def\v{{\mathbf v}}
\def\u{{\mathbf u}}
\def\h{{\mathbf h}}
\def\k{{\mathbf k}}
\def\c{{\mathbf c}}
\def\d{{\mathbf d}}
\def\p{{\mathbf p}}
\def\q{{\mathbf q}}
\def\e{{\mathbf e}}
\patchcmd\newpage{\vfil}{}{}{}
\title{Clustering, multicollinearity, and singular vectors}
\author{Hamid Usefi\thanks{Department of Mathematics and Statistics, Memorial University of Newfoundland, 
		St. John's, NL, Canada, A1C 5S7, 
		 (\email{usefi@mun.ca}).}}
\begin{document}
\maketitle

\begin{tcbverbatimwrite}{tmp_\jobname_abstract.tex}
\begin{abstract}
	 Let $A$ be a matrix with its pseudo-matrix $A^{\dagger}$ and set   $S=I-A^{\dagger}A$. We prove that, after re-ordering the columns of $A$, the matrix $S$ has a block-diagonal form where each block corresponds to a set of 
	linearly dependent columns. This allows us to identify redundant columns in $A$. We explore  some applications in supervised and unsupervised learning, specially feature selection, clustering, and sensitivity of solutions of least squares solutions.

\end{abstract}

\begin{keywords}
  multicollinearity, clustering, singular value decomposition, sparse solutions, linear systems, semi-supervised learning, subset selection
\end{keywords}

\end{tcbverbatimwrite}
\input{tmp_\jobname_abstract.tex}

\section{Introduction}
\label{sec:intro}

 In this paper, we tackle the problem of  identifying  linearly dependent columns of a matrix $A$. In other words, we identify clusters of columns of $A$ such  that  every two columns in a cluster are part of a dependence relation. Variations of this problem have been extensively studied.   Given a matrix $A\in {\bf R}^{m\times n}$, a vector $\b\in {\bf R}^m$, and $\epsilon > 0$, Natarajan \cite{natarajan1995sparse} considers sparse approximate solutions to $A\x=\b$, that is  compute a vector $\x$ that satisfies  $\|A\x - \b\|_{2} \leq \epsilon $ if such exists, such that $\x$ has the fewest number of non-zero entries over all such vectors.
   This problem can also be viewed as multicollinearity and   ``subset selection'' in statistical
 modeling and has been extensively explored \cite{golub2012matrix, civril2014column,  Golub76rankdegeneracy}.  Multicollinearity arises in many contexts, including  regression \cite{alin2010multicollinearity, vatcheva2016multicollinearity}, ecology \cite{dormann2013collinearity, farrell2019machine}, and machine learning \cite{tamura2017best, xu2019machine}.

Let $A$ be an  $m\times n$ matrix of rank $\rho\leq \min(m,n)$ and denote by $\mathcal{T}$ the set of all columns of $A$. Let 
$\tau\subseteq \mathcal{T}$ and denote by $\mathcal{V}$ and $\mathcal{V}'$ the subspaces spanned by $\tau$ and $\mathcal{T}\setminus\tau$, respectively. We say $\tau$ is \textit{maximally dependent} if  $\tau$ is dependent and $\mathcal{V}\cap \mathcal{V}'=0$. 

Our main objective is to identify maximally dependent subsets of columns of $A$; these subsets can be viewed as clusters.  Let  $A^{\dagger}$  be the pseudo-matrix of $A$ and  set $S=I-A^\dagger A$. We prove in Theorem \ref{main1} that if 
$\tau_1$ and $\tau_2$ are maximally dependent subsets of columns of $A$ then $S_{i,j}=0$, for every 
$\F_i\in \tau_1$ and $\F_j\in \tau_2$. This implies that $S$ is similar to a block-diagonal matrix, this can be seen by    moving  and  grouping the columns within the same  cluster together.
We make a  critical observation in Lemma \ref{SP}    that $S$ is the same as  the orthogonal projection $P$  onto the null space   of $A$.

Our ultimate goal is to prove that the blocks on the diagonal of $S$ (after re-labeling columns of $A$) correspond to 
maximally dependent subsets of columns of $A$. What we need to show is that these blocks themselves do not decompose into block-diagonal matrices. To do this, we define a graph $G$ where the nodes are columns of  $A$ and there is an edge between $\F_i$ and $\F_j$ if and only if $S_{i,j}\neq 0$. In Theorem \ref{graph}, we prove that 
if $\tau\subseteq \mathcal{T}$ is a maximally dependent subset, then the  sub-graph $G_{\tau}$ of $G$ corresponding to $\tau$   is connected. We deduce that the connected components of $G$ corresponds to clusters of linearly dependent columns of $A$.

Supervised learning is a central problem in machine learning and data mining. In this process, a mathematical/statistical model is trained and generated based on a 
pre-defined number of instances (train data) and is tested against the remaining (test data).
 Let  $D=[A\mid \mathbf{b}]$ be a dataset where $\mathbf{b}$ is the class label and $A$ is a matrix each of its rows is a sample (instance). 
 The columns of $A$ are referred to features or attributes. Feature selection is the process of selecting a small subset of features that can be used to build a model to predict  $\mathbf{b}$  \cite{RJ+97}.
  In Section \ref{applications}, we explain how the signature matrix of $D$ can be used to develop a feature selection algorithm. We shall also mention applications to clustering and un-supervised learning.

We also explore an application to the sensitivity of solutions of least square problems to perturbations. 
Chandrasekaran and Ipsen in \cite{chandrasekaran1995sensitivity} investigated the the errors in individual components of the solution to systems of linear equations and linear least squares problems of full column rank. They proposed  ``componentwise condition numbers'' to measure the sensitivity of each solution component to perturbations and showed   that any linear system has at least one solution component whose sensitivity to perturbations is proportional to the condition number of the matrix; but there may exist many components that are much better conditioned. Unless the perturbations are restricted, no norm-based relative error bound can predict the presence of well-conditioned components, so these component-wise condition numbers are essential. These results are further extended in \cite{zeng2019sensitivity} and shown that 
 the sensitivity of a singular linear system
$A \x = \b$ is measured by $\| A\|_ 2 \| A^\dagger \|_ 2$.  

We make an interesting observation regarding the
sensitivity of solutions of  $A \x = \b$. Consider the linear system $A\mathbf{x}=\mathbf{b}$. Let $\tilde A$  be  a perturbation of $A$ by adding a random column vector to $\F_i$, that is $\tilde A=\left[
 \begin{array}{c|c|c|c|c|c|c}
\F_1& \cdots& \F_{i-1}& \F_i+\mathbf{E}&  \F_{i+1}&\cdots &\F_n
 \end{array}
 \right].
 $
 Consider the solution $\mathbf{\tilde x}$ to the  least squares  problem $\tilde A \mathbf{\tilde x}=\mathbf{b}$. We observe that if $\F_j$ is a columns that is not in the same cluster as $\F_i$ then 
 $x_j=\tilde x_j$.  As we can see from Example \ref{ex3}, columns that are in the same cluster are intertwined with each other and isolated from other clusters; so perturbations to a column will affect only the components of solutions corresponding to  columns in the same cluster. 
We shall prove this in Theorem \ref{pert} for  rank-deficient matrices. To do so, we first realize that $\tilde A$ can be viewed as a rank-1 update of $A$. Then we use Meyer's result   \cite{meyer1973generalized} that provides the  pseudo-inverse of rank-1 updates.
These pseudo-inverses  can be written in terms of $A^\dagger$ and a sum other matrix products that involve rows of the signature matrix $S$ of $A$. 
We conclude by  briefly explaining how this latter result can be applied in (un)supervised learning.

\section{Main results}
Let $A$ be an $m\times n$ matrix of rank $\rho$ and consider the SVD of $A$ as   $A=U\Sigma V^T$, where $U_{m\times m}$ and $V_{n\times n}$ are orthogonal matrices and  
$\Sigma=\text{diag}(\sigma_1, \ldots, \sigma_{\rho}, 0, \ldots, 0 )$ is an $m\times n$ diagonal matrix. Also, recall that the Moore-Penrose inverse of $A$ is the $n\times m$ matrix $A^{\dagger}=VS^{-1}U^T$, where $S^{-1} = \text{diag}(\sigma_1^{-1}, \ldots, \sigma_{\rho}^{-1}, 0, \ldots, 0 )$. Throughout, we shall always use 2-norm of a vector or matrix.

We denote column $j$ of $V$ by $\mathbf{v}_j$ and row $j$ of $V$ by $\mathbf{v}^j$.   Furthermore, we partition $\mathbf{v}^j$ as 
$\mathbf{v}^j=
\left[
\begin{array}{c|c}
\mathbf{v}^{j,1} & \mathbf{v}^{j,2}
\end{array}
\right]
$, 
where $\mathbf{v}^{j,1}$ consists of the first $\rho$ entries of $\mathbf{v}^{j}$ and 
$\mathbf{v}^{j,2}$ is the remaining $n-\rho$ entries. Note  that $A\mathbf{v}_j=0$, for all $\rho+1\leq j\leq n$, and moreover $\ker (A)$  is spanned by all $\mathbf{v}_{\rho+1}, \ldots, \mathbf{v}_{n} $. We denote by $\mathbf{F_j}$  the $j$-th column of $A$.

Let $\bar V$ be the matrix consisting of columns $\rho+1, \ldots, n$ of $V$, that is $\bar V=
\left[
\begin{array}{c|c|c}
\mathbf{v}_{\rho+1} & \cdots & \mathbf{v}_{n}
\end{array}
\right].
$
Let $P=\bar V \bar{V}^T$. Note that 
$P\mathbf{w}=\mathbf{w}$, for every $\mathbf{w}\in \mathcal{N}(A)$, where $\mathcal{N}(A)$ is the null space   of $A$. Indeed,   $P$ is the orthogonal projection onto $\mathcal{N}(A)$, that is range of $P$ is  $\mathcal{N}(A)$, $P^2=P$ and $P^T=P$. We also let $S=I-A^\dagger A$. The matrices $S$ and $P$ are closely related as the following result shows. We denote by $\mathbf{e}_i$ the $i$-th standard column vector.

\begin{lemma}\label{SP}
	We have $S=P$. 
\end{lemma}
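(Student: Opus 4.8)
The plan is to reduce everything to the SVD $A=U\Sigma V^T$ and compute $S$ explicitly, since all the ingredients are already in place in the preceding paragraph. Write the Moore–Penrose inverse as $A^{\dagger}=V\Sigma^{\dagger}U^T$, where $\Sigma^{\dagger}=\mathrm{diag}(\sigma_1^{-1},\ldots,\sigma_\rho^{-1},0,\ldots,0)$ is $n\times m$. Using $U^TU=I_m$,
\[
A^{\dagger}A = V\Sigma^{\dagger}U^TU\Sigma V^T = V\bigl(\Sigma^{\dagger}\Sigma\bigr)V^T,
\]
and $\Sigma^{\dagger}\Sigma$ is the $n\times n$ diagonal matrix $D_\rho:=\mathrm{diag}(1,\ldots,1,0,\ldots,0)$ with exactly $\rho$ ones. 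Since $V$ is square orthogonal, $I_n=VV^T$, hence
\[
S = I - A^{\dagger}A = V\bigl(I_n - D_\rho\bigr)V^T = \sum_{j=\rho+1}^{n}\mathbf{v}_j\mathbf{v}_j^T = \bar V\bar V^T = P,
\]
because $I_n-D_\rho$ has ones precisely in positions $\rho+1,\ldots,n$. That is the entire argument.

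A more conceptual route, which I would mention in a sentence, avoids coordinates: it is a standard fact that $A^{\dagger}A$ is the orthogonal projection onto the row space $\Range(A^T)=\mathcal{N}(A)^{\perp}$, so $S=I-A^{\dagger}A$ is the orthogonal projection onto $\mathcal{N}(A)$; as the excerpt already notes that $P$ is the orthogonal projection onto the same subspace, uniqueness of orthogonal projections gives $S=P$. One could also verify $S=P$ by checking the three defining properties directly from $S=I-A^{\dagger}A$ — that $S^2=S$, $S^T=S$, and $\Range(S)=\mathcal{N}(A)$ — using only the four Penrose identities; this works but is longer than the one-line SVD computation, so I would present the SVD version as the proof.

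There is no genuine obstacle here; the only thing to watch is the bookkeeping of block sizes — $\Sigma$ is $m\times n$, $\Sigma^{\dagger}$ is $n\times m$, and $\Sigma^{\dagger}\Sigma$ is $n\times n$ — and not confusing $VV^T=I_n$ (valid because $V$ is square orthogonal) with $\bar V\bar V^T$, which is a rank-$(n-\rho)$ projection rather than the identity. With that care, the identity $S=P$ falls out immediately, and it is precisely this identification that licenses the later use of properties of the orthogonal projection (idempotence, symmetry, $\Range P=\mathcal{N}(A)$) when analyzing the block structure of $S$.
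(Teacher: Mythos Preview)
Your proof is correct and follows essentially the same SVD-based route as the paper: both compute $A^{\dagger}A=V\bigl[\begin{smallmatrix}I_\rho&0\\0&0\end{smallmatrix}\bigr]V^T$ and then subtract from $I=VV^T$. The only cosmetic difference is that the paper carries this out entrywise---showing $S_{i,j}=\mathbf{e}_i^T\mathbf{e}_j-\langle\mathbf{v}^{i,1},\mathbf{v}^{j,1}\rangle=\langle\mathbf{v}^{i,2},\mathbf{v}^{j,2}\rangle=P_{i,j}$ via orthogonality of the rows of $V$---whereas you do the same computation at the matrix level in one line; the underlying idea is identical.
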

\begin{proof} 
	 Note that 
	\begin{align*}
		S_{i,j}=	\e_i^T(I-A^\dagger A) \e_j&=\e_i^T\e_j-\e_i^TVS^{-1}U^TUSV^T\mathbf{e}_j=\e_i^T\e_j-\mathbf{e}_i^TVS^{-1}SV^T\mathbf{e}_j\\
		&=\e_i^T\e_j-\mathbf{v}^i
		\left[
		\begin{array}{c|c}
			I_{\rho} & 0 \\
			\hline
			0 & 0
		\end{array}
		\right]
(\mathbf{v}^j)^T
		=\e_i^T\e_j-\langle \mathbf{v}^{i,1}, \mathbf{v}^{j,1}\rangle.
	\end{align*}
	Now, if $i\neq j$ then  $\mathbf{v}^{i}$ and  $\mathbf{v}^{j}$ are orthogonal and so we have 
	$S_{i,j}=-\langle \mathbf{v}^{i,1}, \mathbf{v}^{j,1}\rangle=\langle \mathbf{v}^{i,2}, \mathbf{v}^{j,2}\rangle=P_{i,j}.$
	  Similarly, 
$S_{i,i}=1-\langle \mathbf{v}^{i,1}, \mathbf{v}^{i,1}\rangle=\langle \mathbf{v}^{i,2}, \mathbf{v}^{i,2}\rangle=P_{i,i}$.
\end{proof}

 Even though, $S$ and $P$ are the same, the computational complexity of computing of $S$ and $P$ might be different. For to compute $P$ we just need the right singular vectors of the symmetric matrix $A^TA$. On the other hand, if $A$ is full row rank then we know 
 $A^{\dagger}=A^T(AA^T)^{-1}$. So in case $A$ has full row-rank, the complexity of computing $S$ is the same as complexity of matrix inversion. 
 
 \begin{theorem}\label{indp-thm2} The following are equivallent for a matrix $A$.
 	\begin{enumerate}
 		\item 	The  column $\mathbf{F}_j$ of $A$ is independent of the rest of columns of $A$;
 		\item 	$\mathbf{v}^{j,2}=0$;
 		\item $\mathbf{e}_j^T(I-A^\dagger A)=0$;
 		\item $P_{j,j}=0$.
 	\end{enumerate}
 \end{theorem}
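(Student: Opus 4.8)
The plan is to prove the one nontrivial equivalence, $(1)\Leftrightarrow(2)$, from scratch, and then to dispatch $(2)\Leftrightarrow(3)$ and $(2)\Leftrightarrow(4)$ as one-line consequences of Lemma~\ref{SP} and the entrywise identity extracted from its proof.

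\emph{Step 1: $(1)\Leftrightarrow(2)$.} First I would restate $(1)$ in terms of the null space. The column $\mathbf{F}_j$ fails to be independent of the remaining columns exactly when $\mathbf{F}_j=\sum_{i\neq j}c_i\mathbf{F}_i$ for some scalars $c_i$, equivalently when there is a vector $\mathbf{w}\in\mathcal{N}(A)$ with $\mathbf{e}_j^T\mathbf{w}\neq0$ (one passes between $\mathbf{w}$ and the $c_i$'s by normalizing the $j$-th coordinate). Hence $(1)$ holds iff every $\mathbf{w}\in\mathcal{N}(A)$ has $j$-th coordinate $0$, i.e. $\mathcal{N}(A)\subseteq\{\mathbf{w}:\mathbf{e}_j^T\mathbf{w}=0\}$. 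Since the columns $\mathbf{v}_{\rho+1},\dots,\mathbf{v}_n$ of $\bar V$ span $\mathcal{N}(A)$, this is equivalent to $\mathbf{e}_j^T\bar V=0$; and $\mathbf{e}_j^T\bar V$ is exactly the $j$-th row of $\bar V$, which is $\mathbf{v}^{j,2}$ by definition. Thus $(1)\Leftrightarrow(2)$.

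\emph{Step 2: $(2)\Leftrightarrow(4)$ and $(2)\Leftrightarrow(3)$.} From the computation in the proof of Lemma~\ref{SP} the $(i,j)$ entry of $P=S$ equals $\langle\mathbf{v}^{i,2},\mathbf{v}^{j,2}\rangle$; in particular $P_{j,j}=\|\mathbf{v}^{j,2}\|^2$, so $(4)\Leftrightarrow(2)$ is immediate. For $(3)$, note $\mathbf{e}_j^T(I-A^\dagger A)=\mathbf{e}_j^TS=\mathbf{e}_j^TP=\mathbf{v}^{j,2}\bar V^T$ by Lemma~\ref{SP}; since the columns of $\bar V$ are orthonormal, $\|\mathbf{v}^{j,2}\bar V^T\|=\|\mathbf{v}^{j,2}\|$, so this row vanishes iff $\mathbf{v}^{j,2}=0$, giving $(3)\Leftrightarrow(2)$. (Alternatively, since $P=P^T=P^2$ one has $P_{j,j}=\mathbf{e}_j^TP^TP\mathbf{e}_j=\|P\mathbf{e}_j\|^2$, so the $j$-th column of $P$ — hence, by symmetry, the $j$-th row — is zero iff $P_{j,j}=0$.)

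I do not anticipate a genuine obstacle. The only point that needs a little care is the translation in Step~1: identifying ``$\mathbf{F}_j$ is independent of the other columns'' with ``$\mathcal{N}(A)$ lies in the coordinate hyperplane $\{\mathbf{w}:\mathbf{e}_j^T\mathbf{w}=0\}$'', and then using $\mathcal{N}(A)=\Range(\bar V)$ to convert the coordinate condition into the vanishing of the $j$-th row of $\bar V$. Everything after that is bookkeeping with the formula from Lemma~\ref{SP}.
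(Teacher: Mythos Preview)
Your proposal is correct and follows essentially the same approach as the paper: both arguments hinge on translating ``$\mathbf{F}_j$ is independent of the other columns'' into ``every null-space vector has vanishing $j$-th coordinate'' and hence into $\mathbf{v}^{j,2}=0$, and both read off the remaining equivalences from the entrywise identity $P_{i,j}=\langle\mathbf{v}^{i,2},\mathbf{v}^{j,2}\rangle$ of Lemma~\ref{SP}. The only difference is organizational---the paper proves the cycle $(1)\Rightarrow(2)\Rightarrow(3)\Rightarrow(4)\Rightarrow(1)$, whereas you hub everything through $(2)$---but the content is the same.
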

 \begin{proof}
 	Note  that $A\mathbf{v}_i=0$, for all $\rho+1\leq i\leq n$.
 	Let $k$ be in the range $\rho+1\leq k\leq n$. Note that  $A\mathbf{v}_{k}=0$ yields a dependence relation between the columns of $A$. So if  $\mathbf{F}_j$ is independent of the rest of columns of $A$, we deduce that the entry in the $j$-th position of $\mathbf{v}_{k}$ must be zero, that is $v_{j, k}=0$. 
 	So the $j$-th row of $V$ is of the form 
 	$\mathbf{v}^j=[v_{j, 1} \, \cdots\, v_{j, \rho}\, 0\, \cdots \,0 ]$. Hence, $\mathbf{v}^{j,2}=0$. This proves $(1) \Rightarrow (2)$. Now suppose that $\mathbf{v}^{j,2}=0$. So, we have 
 	\begin{align*}
 	\mathbf{v}^j
 	\left[
 	\begin{array}{c|c}
 	I_{\rho} & 0 \\
 	\hline
 	0 & 0
 	\end{array}
 	\right]
 	=	\mathbf{v}^j.
 	\end{align*}
 	We have
 	\begin{align*}
 	\mathbf{e}_j^TA^\dagger A&=\mathbf{e}_j^TVS^{-1}U^TUSV^T=\mathbf{e}_j^TVS^{-1}SV^T\\
 	&=\mathbf{v}^j
 	\left[
 	\begin{array}{c|c}
 	I_{\rho} & 0 \\
 	\hline
 	0 & 0
 	\end{array}
 	\right]
 	V^T
 	=\mathbf{v}^j
 	V^T
 	=\mathbf{e}_j^TI.
 	\end{align*}
 	Hence, $\mathbf{e}_j^T(I-A^\dagger A)=0$. So,  $(2) \Rightarrow (3)$.  The implication $(3) \Rightarrow (4)$ is rather obvious because $\mathbf{e}_j^T(I-A^\dagger A)=0$ means that  the entire $j$-th row of $I-A^\dagger A$ is zero. So, by Lemma \ref{SP},  $P_{j,j}=0$. Finally, suppose 
 	$P_{j,j}=0$. Note that  a dependence relation between $\mathbf{F}_j$ and the other columns, yields a vector $\mathbf{z}$ whose $j$-th position is non-zero and $A\mathbf{z}=0$. So, $\mathbf{z}$ is in the $\ker(A)$ and can be expressed in terms of $\mathbf{v}_{\rho+1}, \ldots, \mathbf{v}_n$. So,  the $j$-th component of at least one of the $\mathbf{v}_{\rho+1}, \ldots, \mathbf{v}_n$ must be non-zero. 
 	Hence,  $\mathbf{v}^{j,2}\neq 0$. But then $P_{j,j}=\langle \ve^{j,2}, \ve^{j,2}\rangle\neq 0$, which is a contradiction. Hence, $\F_j$ is independent of the rest of columns of $A$. So, $(4) \Rightarrow (1)$, as required.
 \end{proof}

\begin{corollary}\label{Pii}
	Suppose that $\F_1, \ldots,\F_t$ are linearly dependent and independent of the rest of the $\F_k$'s. Then 
 $P_{i,i}\neq 0$, for every $1\leq i\leq t$.
\end{corollary}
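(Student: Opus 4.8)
The plan is to obtain this as an immediate consequence of Theorem~\ref{indp-thm2}. By the equivalence of conditions $(1)$ and $(4)$ there, $P_{i,i}=0$ holds exactly when $\F_i$ is independent of the remaining columns of $A$. Hence it suffices to show that each of $\F_1,\dots,\F_t$ is a linear combination of the other columns of $A$; in fact it is enough to express $\F_i$ through $\F_1,\dots,\F_{i-1},\F_{i+1},\dots,\F_t$.

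Since $\F_1,\dots,\F_t$ are linearly dependent, there is a nontrivial relation $\sum_{k=1}^{t}a_k\F_k=0$. If $a_i\neq 0$ we are done: $\F_i=-a_i^{-1}\sum_{k\neq i}a_k\F_k$ lies in the span of the other columns, so Theorem~\ref{indp-thm2} yields $P_{i,i}\neq 0$. The real content of the proof is therefore to rule out the possibility that \emph{every} dependence relation among $\F_1,\dots,\F_t$ assigns coefficient $0$ to $\F_i$. If that were the case, then on one hand $\F_i$ would not lie in the span of $\{\F_k : k\le t,\ k\neq i\}$, and on the other hand a nontrivial relation among $\F_1,\dots,\F_t$ would already be a nontrivial relation among $\{\F_k : k\le t,\ k\neq i\}$, making that smaller set linearly dependent. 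Combining these two facts with the hypothesis that $\{\F_1,\dots,\F_t\}$ is independent of the columns outside it, a short subspace computation shows that $\{\F_k : k\le t,\ k\neq i\}$ is itself linearly dependent and independent of everything outside \emph{it}; that is, $\F_i$ detaches from the cluster, contradicting that $\F_1,\dots,\F_t$ was taken to be a single, indecomposable cluster. Thus some dependence among $\F_1,\dots,\F_t$ does involve $\F_i$, and the first case applies.

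An alternative route, which I would equally be willing to write out, works directly with $P=\bar V\bar V^{T}$ from Lemma~\ref{SP}: there $P_{i,i}=\langle\mathbf{v}^{i,2},\mathbf{v}^{i,2}\rangle=\|\mathbf{v}^{i,2}\|^{2}$, so it is enough to exhibit a vector in $\mathcal{N}(A)$ with nonzero $i$-th coordinate; since any such vector is a linear combination of $\mathbf{v}_{\rho+1},\dots,\mathbf{v}_n$, this already forces $\mathbf{v}^{i,2}\neq 0$. A dependence $\sum_{k=1}^{t}a_k\F_k=0$ supplies the null vector $(a_1,\dots,a_t,0,\dots,0)^{T}$ of $A$, and the matter again reduces to the same point: that the $i$-th column genuinely occurs in a dependence internal to the cluster.

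I expect that point — no column of the cluster is extraneous to its dependence structure — to be the only real obstacle. It is precisely where the hypothesis ``linearly dependent \emph{and} independent of the rest'' must be used to its full strength; granting it, the remainder is a one-line appeal to Theorem~\ref{indp-thm2} (equivalently, to the identity $P=\bar V\bar V^{T}$).
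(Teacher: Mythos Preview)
Your route is exactly the paper's: the corollary is placed immediately after Theorem~\ref{indp-thm2} with no proof, so the intended argument is simply the contrapositive of $(1)\Leftrightarrow(4)$ applied to each $\F_i$. On that level your proposal matches the paper.

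Where you go beyond the paper is in worrying about whether every $\F_i$ genuinely appears with nonzero coefficient in some dependence among $\F_1,\dots,\F_t$. You are right to worry, and the paper simply glosses over it. But your resolution does not close the gap: you conclude by saying this would contradict ``that $\F_1,\dots,\F_t$ was taken to be a single, indecomposable cluster,'' yet neither the corollary's hypothesis nor the paper's definition of ``maximally dependent'' (dependent, with span disjoint from the span of the complement) includes indecomposability. In fact, as literally stated, the corollary is false: take $\F_1=\F_2$, $\F_3$ independent of every other column, and $\F_4,\dots,\F_n$ spanning a subspace disjoint from $\mathrm{span}(\F_1,\F_3)$. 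Then $\{\F_1,\F_2,\F_3\}$ is linearly dependent and its span meets the span of the remaining columns trivially, yet $\F_3$ is independent of all other columns and hence $P_{3,3}=0$ by Theorem~\ref{indp-thm2}.

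So the honest situation is: your approach is the paper's approach, and you have correctly located a genuine lacuna in the \emph{statement} rather than in the proof. The corollary needs the extra hypothesis that each $\F_i$ actually participates in some dependence internal to $\{\F_1,\dots,\F_t\}$ (equivalently, that the set is a minimal or indecomposable cluster). Under that reading the appeal to Theorem~\ref{indp-thm2} is a one-liner, exactly as you and the paper intend; without it, no argument can succeed. If you keep your write-up, replace the appeal to indecomposability with an explicit added hypothesis, or note that the corollary as stated requires this strengthening.
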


\begin{ex}\label{ex2}
Consider a $50\times 40$ synthetic matrix $A$ with the only relations between columns of $A$  as follows:
\begin{align*}
\begin{array}{ccc}
-\F_1+2\F_5+\F_6=0,&  \F_1-\F_2-3\F_5+\F_6=0,\\
	-\F_3+\F_5-3\F_6=0,&\quad \F_3-\F_4+2\F_5+4\F_6=0,\\
-\F_7+\F_9-5\F_{10}=0,&  	-\F_8+5\F_9+\F_{10}=0. 
\end{array}
\end{align*}
We note that  $A$ is randomly generated and the only constrain on $A$ is the set of dependent relations given above. Nevertheless, $S$ is independent of $A$ and it captures the dependencies between columns of $A$.

As expected, by Theorem \ref{indp-thm2}, columns 11-40 of $S$ are entirely zero because those columns  are independent from the rest of columns of $A$. 
The signature matrix $S$ (rounded up to two decimals) for $A$ is:
\footnotesize
\setlength{\arraycolsep}{2.5pt} 
\medmuskip = 1mu
\begin{equation*}
\resizebox{.9\linewidth}{!}{%
	$
\left(\begin{array}{ccccccccccccc} 
	0.69 & 0 & 0.06 & -0.44 & -0.12 & -0.06 & 0 & 0 & 0 & 0 & 0  & \cdots & 0\\
	 0 & 0.69 & 0.44 & 0.06 & 0.06 & -0.12 & 0 & 0 & 0 & 0 & 0& \cdots & 0\\
   0.06 & 0.44 & 0.38 & 0 & -0.06 & 0.19 & 0 & 0 & 0 & 0 & 0& \cdots & 0\\
  -0.44 & 0.06 & 0 & 0.38 & -0.19 & -0.06 & 0 & 0 & 0 & 0 & 0 & \cdots & 0\\ 
    -0.12 & 0.06 & -0.06 & -0.19 & 0.94 & 0 & 0 & 0 & 0 & 0 & 0 & \cdots & 0\\
     -0.06 & -0.12 & 0.19 & -0.06 & 0 & 0.94 & 0 & 0 & 0 & 0 & 0 &\cdots &  0\\
      0 & 0 & 0 & 0 & 0 & 0 & 0.04 & 0 & -0.04 & 0.19 & 0 & \cdots & 0\\
      0 & 0 & 0 & 0 & 0 & 0 & 0 & 0.04 & -0.19 & -0.04 & 0 & \cdots & 0\\
       0 & 0 & 0 & 0 & 0 & 0 & -0.04 & -0.19 & 0.96 & 0 & 0 & \cdots & 0\\
        0 & 0 & 0 & 0 & 0 & 0 & 0.19 & -0.04 & 0 & 0.96 & 0 & \cdots & 0\\
         0 & 0 & 0 & 0 & 0 & 0 & 0 & 0 & 0 & 0 & 0 & \cdots & 0\\ 
         \vdots & \vdots &\vdots &\vdots &\vdots &\vdots &\vdots &\vdots &\vdots &\vdots &\vdots &\cdots &\vdots     \\
         0 & 0 & 0 & 0 & 0 & 0 & 0 & 0 & 0 & 0 & 0 & \cdots & 0   \\
      \end{array}\right)            
$}
\end{equation*}

 \normalsize  
 
 Aside from columns 11-40 of $A$ that each form a cluster with a single element, there are two major clusters with one consisting of columns $\F_1,\ldots, \F_6$ and the other consisting of columns $\F_7,\ldots, \F_{10}$. 
 The graph associated to $A$ is pictured below demonstrating the two clusters.
 We note however, that these two clusters are not fully connected, for example $S_{3,4}=0$ meaning that there is no edge between $\F_3$ and $\F_4$. The idea is that columns that correlate with each other form a (connected) cluster.
 \smallskip

 	\begin{figure}[H]
 		 \begin{center}
 \begin{tikzpicture}[scale=3, vertex/.style={draw,circle,fill=tempcolor}, arc/.style={draw,thick,-, green}]
 \foreach [count=\i] \coord in {(0.809,0.588),(0.309,0.951),(-0.309,0.951),(-0.809,0.588),(-1.,0.),(-0.809,-0.588),(-0.309,-0.951),(0.309,-0.951),(0.809,-0.588),(1.,0.)}{
 	\node[vertex] (p\i) at \coord {\i};
 }
 
 \foreach [count=\r] \row in {
 	{0,0,1,1,1,1,0, 0,0,0},
 	{0,0,1,1,1,1,0,0,0,0},{1,1,0,0,1,1,0,0,0,0},{1,1,0,0,1,1,0,0,0,0},{1,1,1,1,0,0,0,0,0,0},{1,1,1,1,0,0,0,0,0,0},
 	{0,0,0,0,0,0, 0, 0, 1,1},{0,0,0,0,0,0,0,0,1,1},{0,0,0,0,0,0,1,1,0,0},{0,0,0,0,0,0,1,1,0,0}}{
 	\foreach [count=\c] \cell in \row{
 		\ifnum\cell=1
 		\draw[arc] (p\r) edge (p\c);
 		\fi
 	}
 }
 \end{tikzpicture}
 \caption{The graph associated to matrix $A$ demonstrating the two clusters.}
  \end{center}
\end{figure}
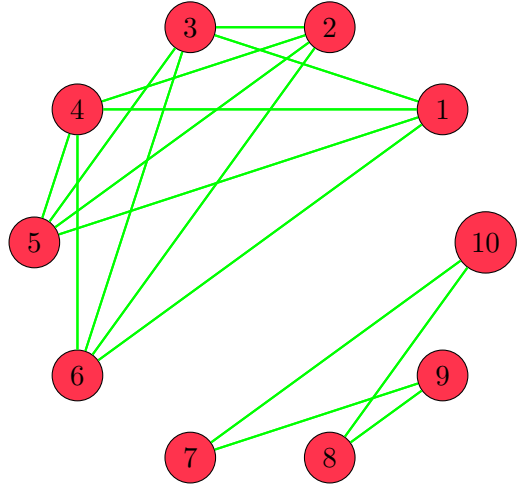
 
 \smallskip
\end{ex}

What determines whether two columns $\F_i$ and $\F_j$ are part of the same cluster is the existence of a linearly dependent 
set $Y$ consisting of some columns of $A$ so that  $\F_i$ and $\F_j$ are in $Y$. In Theorem \ref{main1}, we prove that
if $\F_i$ and $\F_j$ are in different clusters then $P_{i,j}=0$. The converse of this, however, does not hold as one might hope so. That is within the same cluster, there might be $\F_i$ and $\F_j$ such that $P_{i,j}=0$ as can be seen in Example \ref{ex2}. 

We associate a graph $G(A)$ to $A$ whose vertices are the columns of $A$ and we say $\F_i$ and $\F_j$ are connected if  $P_{i,j}\neq 0$. In Theorem \ref{graph}, we prove that each connected component of this graph correspond to a linearly dependent   subset of columns of $A$. 

After identifying clusters (connected components) of $G(A)$ we can even determine the set of minimal relations between columns in each cluster. We shall explain this process for  the matrix $A$ given in Example \ref{ex2}.
\begin{ex}\label{minimal-rel}
	Let $A$ be the matrix in Example \ref{ex2}.  Suppose that we have already identified that columns $\F_1, \ldots,\F_6$ are in the same cluster. We note that rank of $A$ is $\rho=34$. Hence, $A\ve_k=0$, for every $35\leq k\leq 40$.
	Since $A\ve_k=0$ yields a dependence relation between columns of $A$ and $\F_1, \ldots,\F_6$ are independent from the rest of the columns, we deduce that  $A\bar  \ve_k=0$, where $\bar  \ve_k$ consists of the first 6 entries of $\ve_k$. Then we form the matrix 
	$
	M=
	\left[
	\begin{array}{c|c|c}
	\bar{\ve}_{35} & \cdots & \bar{\ve}_{40}
	\end{array}
	\right]
	$. Since any linear combination of columns of $M$ provides a dependence relation between $\F_1, \ldots,\F_6$, we can use elementary (column) operations to transform $M$ into the matrix $\bar C$: 
	
	\begin{align*}
		\bar C=
	\left(\begin{array}{cccccc} 
		-1.0 & 0 & 0 & 0& 0 & 0\\ 
		0 & -1.0 & 0 & 0& 0 & 0\\ 
		0 & 0 & -1.0 & 0& 0 & 0\\ 
		0 & 0 & 0 & -1.0& 0 & 0\\ 
		2.0 & -1.0 & 1.0 & 3.0& 0 & 0\\
		 1.0 & 2.0 & -3.0 & 1.0 & 0 & 0
	 \end{array}\right).
 \end{align*}

	 Then  $\left[
	 \begin{array}{c|c|c}
	 \F_1 &\cdots  &\F_6 
	 \end{array}
	 \right]
	 \bar C=0$; in other words non-zero columns of $\bar C$ give us the  minimal relations between $\F_1, \ldots,\F_6$.
	
\end{ex}

\begin{lemma}\label{cor-inv} 	Let $Z$ be an $m\times n$ matrix.
	Every block matrix of the form 
	$
	B=
	\left[
	\begin{array}{c|c}
	-I_n & Z^T \\
	\hline
	Z &I_{m}
	\end{array}
	\right]
	$ is invertible.
\end{lemma}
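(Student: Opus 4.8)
The plan is to exploit the fact that $B$ is a \emph{square} matrix of order $m+n$, so that it suffices to prove $\ker B=0$. Suppose $B\x=0$, and partition the null vector into its first $n$ and last $m$ coordinates, say as blocks $\u\in\mathbf{R}^n$ and $\v\in\mathbf{R}^m$. Reading off the two block rows of the equation $B\x=0$ gives the pair of relations $-\u+Z^T\v=0$ and $Z\u+\v=0$. From the first, $\u=Z^T\v$; substituting into the second yields $ZZ^T\v+\v=0$, i.e. $(I_m+ZZ^T)\v=0$. Taking the inner product of this identity with $\v$ gives $\|\v\|^2+\|Z^T\v\|^2=0$, hence $\v=0$, and then $\u=Z^T\v=0$. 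Thus $\x=0$, so $B$ is injective; being square, it is invertible.

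Alternatively, one can argue by determinants. Since the leading principal block $-I_n$ is already invertible (for \emph{any} $Z$, with no hypothesis needed), the Schur-complement factorization applies and gives $\det B=\det(-I_n)\cdot\det\!\big(I_m-Z(-I_n)^{-1}Z^T\big)=(-1)^n\det(I_m+ZZ^T)$. The matrix $I_m+ZZ^T$ is symmetric positive definite, since $\v^T(I_m+ZZ^T)\v=\|\v\|^2+\|Z^T\v\|^2>0$ for $\v\neq 0$; in particular its determinant is nonzero, so $\det B\neq 0$.

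There is no genuine obstacle in this lemma: the whole content is the elementary fact that $I+ZZ^T$ is positive definite for an arbitrary matrix $Z$. The only points worth recording carefully are that $B$ is square (so that injectivity already yields invertibility) and, if one takes the determinant route, that it is precisely the invertible block $-I_n$ in the top-left corner that makes the Schur complement available without imposing any condition on $Z$ (such as full rank or a size restriction).
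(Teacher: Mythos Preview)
Your proof is correct; both the kernel argument and the Schur--complement/determinant argument go through cleanly, and the essential point---that $I_m+ZZ^T$ is positive definite---is established in the most elementary way, via $\v^T(I_m+ZZ^T)\v=\|\v\|^2+\|Z^T\v\|^2$.

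The paper takes a slightly different route: it first proves that $I+ZZ^T$ is invertible by writing $Z=U\Sigma V^T$ in SVD form, so that $I+ZZ^T=U(I+\Sigma\Sigma^T)U^T$ has diagonal entries $1+\sigma_i^2>0$; it then simply \emph{exhibits} the explicit inverse
\[
B^{-1}=\left[\begin{array}{cc}-(I+Z^TZ)^{-1}&(I+Z^TZ)^{-1}Z^T\\ Z(I+Z^TZ)^{-1}&(I+ZZ^T)^{-1}\end{array}\right],
\]
leaving the verification to the reader. Your argument is more self-contained for the bare invertibility claim and avoids the SVD entirely. The paper's approach, on the other hand, buys something you do not: the explicit block formula for $B^{-1}$ is actually invoked later (in the proof of Lemma~\ref{Pkj}) to read off the identities $P_1=(I+Z^TZ)^{-1}$ and $P_2=-(I+Z^TZ)^{-1}Z^T$, so for the purposes of the paper as a whole the constructive version carries additional weight.
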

\begin{proof} First we show that $I+ZZ^T$ is invertible. 
	Consider the SVD of $Z$ as $Z=U\Sigma V^T$. Let $\sigma_1\geq  \cdots\geq  \sigma_r\geq 0$ be the singular values of  $Z$, where $r=\min\{m,n\}$.  So, $\Sigma$ is an $m\times n$ diagonal matrix,
	with diagonal elements $\sigma_i$, for all $i=1\cdots r$. Note that 
	$I+ZZ^T=U(I+\Sigma\Sigma^T)U^T$. Since $I+\Sigma\Sigma^T$ is a diagonal matrix with $1+\sigma_i^2$ on the diagonal, $I+\Sigma\Sigma^T$ is invertible and so is  $I+ZZ^T$. 
It is easy to verify that 
	\begin{align*}
	B^{-1}
	=\left[
	\begin{array}{cc}
	-(I+Z^TZ)^{-1}& (I+Z^TZ)^{-1} Z^T\\
	Z(I+Z^TZ)^{-1} & (I+ZZ^T)^{-1}
	\end{array}
	\right].
	\end{align*}

	\end{proof}

\begin{ex}
	Let $A$ be the matrix in Example \ref{ex2}. Then, $P_{i,j}=0$ for all $1\leq i\leq 6$ and $7\leq j\leq n$.
\end{ex}
\begin{proof}
	Let $k$ be in the range $35\leq k\leq 40$. Since $A\F_k=0$, we have
$v_{1,k}\F_1+v_{2,k}\F_2+v_{3,k}\F_3+v_{4,k}\F_4+v_{5,k}\F_5=0$. Substituting in terms of   $\F_5$ and $\F_6$ using the 
matrix $\bar C$ from Example \ref{minimal-rel}, we get 
\begin{align*}
v_{1,k}(2\F_5+\F_6)+v_{2,k}(-\F_5+2\F_6)+v_{3,k}(\F_5-3\F_6)+v_{4,k}(3\F_5+\F_6)+v_{5,k}\F_5+v_{6,k}\F_6=0
\end{align*}
We deduce that
\begin{align*}
2v_{1,k}-v_{2,k}+v_{3,k}+3v_{4,k}+v_{5,k}&=0,\\
v_{1,k}+2v_{2,k}-3v_{3,k}+v_{4,k}+v_{6,k}&=0.
\end{align*}
Since the above equations hold for every $k$ in the range $\rho+1\leq k\leq n$, we deduce that 
\begin{align*}
2\mathbf v^{1,2}-\ve^{2,2}+\mathbf{v}^{3,2}+3\mathbf{v}^{4,2}+\mathbf{v}^{5,2}&=0,\\
\mathbf v^{1,2}+2\ve^{2,2}-3\mathbf{v}^{3,2}+\mathbf{v}^{4,2}+\mathbf{v}^{6,2}&=0.
\end{align*}
Let $j$ be in the range  $7\leq j\leq n$. Then  taking the dot product with $\mathbf{v}^{j,2}$ yields 
\begin{align}\label{1st}
2P_{1,j}-P_{2,j}+P_{3,j}+3P_{4,j}+P_{5,j}&=0,\nonumber \\
P_{1,j}+2P_{2,j}-3P_{3,j}+P_{4,j}+P_{6,j}&=0.
\end{align}
Let 
$
C=\left[
\begin{array}{c|c}
\bar C &0\\
\hline
0 &0
\end{array}
\right]
$ be an $n\times n$ matrix. 
Let $\mathbf{c}_1, \ldots, \mathbf{c}_n$ be the columns of $C$ and denote by $\mathbf{p}^j$ the $j$-th row of $P$. Since $P\mathbf{c}_i=\mathbf{c}_i$, we deduce that 
$\mathbf{p}^j\mathbf{c}_i=\mathbf{c}_{i,j}=0$, since $j\geq 7$. Hence, 
\begin{align}\label{2ed}
-P_{1,j}+2P_{5,j}+P_{6,j}&=0,\nonumber \\
-P_{2,j}-P_{5,j}+2P_{6,j}&=0,\nonumber \\
-P_{3,j}+P_{5,j}-3P_{6,j}&=0,\nonumber \\
-P_{4,j}+3P_{5,j}+P_{6,j}&=0.
\end{align}
Putting together the Equations \eqref{1st} and \eqref{2ed}, we deduce that 
$$
B\begin{bmatrix}
P_{1,j} & P_{2,j} & P_{3,j} & P_{4,j} & P_{5,j} & P_{6,j}
\end{bmatrix}^T
=0,
$$
where
\begin{align*}
B=
\begin{bmatrix}
-1 & 0& 0& 0  & 2 & 1\\
0&-1 &  0& 0  & -1 & 2\\
0& 0&-1 &   0  & 1 & -3\\
0  & 0& 0&-1 &    3 & 1\\
2  & -1& 1&3&    1 & 0\\
1  & 2& -3&1&    0 & 1\\
\end{bmatrix}
=\left[
\begin{array}{c|c}
-I_4 &Z^T\\
\hline
Z &I_2
\end{array}
\right], \quad
Z=
\left[
\begin{array}{cccc}
	2  & -1& 1&3\\
	1  & 2& -3&1
\end{array}
\right].
\end{align*}

Since, by Lemma \ref{cor-inv}, $B$ 
is invertible, we deduce that  $P_{1,j} = \cdots= P_{6,j}=0$.

\end{proof}

For the rest of this section, we assume that $\tau=\{\F_1, \ldots,\F_t\}$ is a cluster, that is $\F_1, \ldots,\F_t$ are linearly dependent and independent of the rest of the $\F_k$, where $k\geq t+1$.
This means there are linear equations that yield  dependencies  between of  columns  $\tau$. 
Suppose that the dimension of the subspace spanned by $\F_1, \ldots,\F_t$  is $t-r$, for some $r\geq 1$.
Without loss of generality we assume that $\F_{r+1}, \ldots,\F_t$ are independent of each other.

\begin{theorem}\label{main1}
We have $P_{i,j}=0$, for every $1\leq j\leq t$ and every $i\geq t+1$.
\end{theorem}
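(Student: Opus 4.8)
The plan is to carry out in general the computation done for the worked example just above: fix an index $i\ge t+1$ and produce a linear system, with invertible coefficient matrix, satisfied by the vector of unknowns $(P_{1,i},\dots,P_{t,i})^{T}$. Since $P=P^{T}$ (Lemma \ref{SP}), showing $P_{j,i}=0$ for all $1\le j\le t$ is exactly the asserted statement. I would obtain the needed equations from two sources: the \emph{internal} relations among $\F_1,\dots,\F_t$ forced by $\dim\mathcal V=t-r$, and the \emph{external} relations $A\ve_k=0$ for $\rho+1\le k\le n$ restricted to the cluster.

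For the internal part, since $\F_{r+1},\dots,\F_t$ span $\mathcal V$, for each $a$ with $1\le a\le r$ there are scalars $c_{a,r+1},\dots,c_{a,t}$ with $\F_a=\sum_{b=r+1}^{t}c_{ab}\F_b$. Put $\c_a=\e_a-\sum_{b=r+1}^{t}c_{ab}\e_b$, so that $A\c_a=0$ and hence $\c_a\in\mathcal N(A)$, giving $P\c_a=\c_a$. Reading the $i$-th coordinate of this identity (which vanishes because $i\ge t+1$ lies outside the support of $\c_a$) and using $P=P^{T}$ yields
\[
P_{a,i}-\sum_{b=r+1}^{t}c_{ab}\,P_{b,i}=0\qquad(1\le a\le r).
\]

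For the external part, fix $k$ with $\rho+1\le k\le n$; then $A\ve_k=0$, i.e.\ $\sum_{j=1}^{n}v_{jk}\F_j=0$, so $\sum_{j=1}^{t}v_{jk}\F_j=-\sum_{j=t+1}^{n}v_{jk}\F_j$ lies in $\mathcal V\cap\mathcal V'=0$ (maximal dependence of $\tau$), forcing $\sum_{j=1}^{t}v_{jk}\F_j=0$. Substituting $\F_a=\sum_b c_{ab}\F_b$ for $a\le r$ and using independence of $\F_{r+1},\dots,\F_t$ gives $\sum_{a=1}^{r}c_{ab}v_{ak}+v_{bk}=0$ for $r+1\le b\le t$. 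As this holds for every such $k$, it is the identity $\sum_{a=1}^{r}c_{ab}\,\ve^{a,2}+\ve^{b,2}=0$ between rows-restricted-to-the-null-block; taking the inner product with $\ve^{i,2}$ and recalling $P_{p,q}=\langle\ve^{p,2},\ve^{q,2}\rangle$ (from the proof of Lemma \ref{SP}) gives
\[
\sum_{a=1}^{r}c_{ab}\,P_{a,i}+P_{b,i}=0\qquad(r+1\le b\le t).
\]
This splitting step — that $A\ve_k=0$ alone already forces $\sum_{j\le t}v_{jk}\F_j=0$ — is the crux, and is precisely where the ``cluster/maximally dependent'' hypothesis is used; everything else is bookkeeping.

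Finally I would assemble the $t$ equations. With $\mathbf p'=(P_{1,i},\dots,P_{r,i})^{T}$, $\mathbf p''=(P_{r+1,i},\dots,P_{t,i})^{T}$ and $C'=(c_{ab})$ the $r\times(t-r)$ coefficient matrix, the first family reads $\mathbf p'=C'\mathbf p''$ and the second reads $(C')^{T}\mathbf p'+\mathbf p''=0$, so that (after negating the first block row)
\[
\left[\begin{array}{c|c}-I_{r}&C'\\\hline (C')^{T}&I_{t-r}\end{array}\right]
\begin{bmatrix}\mathbf p'\\ \mathbf p''\end{bmatrix}=\mathbf 0 .
\]
This coefficient matrix has exactly the form of Lemma \ref{cor-inv} with $Z=(C')^{T}$, hence is invertible, forcing $\mathbf p'=\mathbf p''=0$, i.e.\ $P_{j,i}=0$ for $1\le j\le t$. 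Since $i\ge t+1$ was arbitrary, the theorem follows.
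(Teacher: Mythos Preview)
Your argument is correct and follows essentially the same route as the paper's own proof: both derive one family of equations from $P\c_a=\c_a$ (the paper's $P\mathbf c_j=\mathbf c_j$) and the other family by restricting $A\ve_k=0$ to the cluster, substituting for $\F_1,\dots,\F_r$, and then taking inner products with $\ve^{i,2}$, and both assemble these into the same block system with coefficient matrix $\left[\begin{smallmatrix}-I_r & Z^T\\ Z & I_{t-r}\end{smallmatrix}\right]$ and invoke Lemma~\ref{cor-inv}. Your write-up is in fact a bit more explicit than the paper's about where the maximal-dependence hypothesis enters (the splitting $\sum_{j\le t}v_{jk}\F_j\in\mathcal V\cap\mathcal V'=0$), which the paper uses but does not highlight.
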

\begin{proof}
Note that $A\ve_k=0$ for each $k$ in the range $\rho+1\leq k\leq n$. Let $\bar{\ve}_k$ be the first $t$ entries of $\ve_k$. Consider the $t\times (n-\rho)$ matrix 
$
M=
\left[
\begin{array}{c|c|c}
\bar{\ve}_{\rho+1} & \cdots & \bar{\ve}_n
\end{array}
\right]
$.
Note that any linear dependence between $\F_1, \ldots,\F_t$ yields a vector that lies in the column space of $M$ and vice versa. We use elementary column operations to transform $M$ to a matrix of the form
\begin{align}\label{matrixC}
\bar C=
\begin{bmatrix}
-1& 0 & 0&  0 & 0 &0 &\cdots& 0  \\
0 & -1 & 0& 0  &0 &0 & \cdots& 0 \\
\vdots & \vdots & \vdots& \vdots & \vdots & \vdots& \vdots & \vdots \\
0 &0 & \cdots & 0 & -1& 0 &\cdots& 0\\
c_{r+1, 1} & c_{r+1, 2} & \cdots & \cdots & c_{r+1, r}& 0 &\cdots& 0\\
\vdots & \vdots & \vdots& \vdots & \vdots& \vdots &  \vdots & \vdots \\
c_{t,1} & c_{t,2} & \cdots &\cdots& c_{t,r} & 0& \cdots& 0
\end{bmatrix}
=
\left[
	\begin{array}{c|c}
-I_r & 0 \\
\hline
Z& 0
\end{array}
\right].
\end{align}

Let us denote by $\bar{\mathbf{c}}_j$ the $j$-th column of $\bar C$. We also let 
$\mathbf{c}_j=
\begin{bmatrix}
\bar{\mathbf{c}}_j & 0& \cdots & 0
\end{bmatrix}^T\in \mathbb{R}^n$ and set $
C=
\left[
\begin{array}{c|c|c}
\mathbf{c}_1 & \cdots & \mathbf{c}_{n-\rho}
\end{array}
\right]
$.
Note that $\mathbf{c}_j\in \ker(A)$. Furthermore,   any linear dependence between $\F_1, \ldots,\F_t$ yields a vector that lies in the subspace spanned by  $\mathbf{c}_1,\cdots, \mathbf{c}_r$ and vice versa.   Let $k$  be in the range $\rho+1\leq k\leq n$. Since $A\ve_k=0$, we get 
$v_{1,k}\F_1+\cdots +v_{t,k}\F_t=0$. Now substituting for $\F_1, \ldots,\F_r$ in terms of 
$\F_{r+1}, \ldots,\F_t$ and using the matrix $ C$ implies the following equations:

\begin{align}\label{csvs}
\begin{cases}
c_{t,1}v_{1,k}+c_{t,2}v_{2,k}+\cdots + c_{t,r}v_{r,k}+ v_{t,k}=0\\
c_{t-1,1}v_{1,k}+c_{t-1,2}v_{2,k}+\cdots + c_{t-1,r}v_{r,k}+ v_{t-1,k}=0\\
\vdots\\
c_{r+1,1}v_{1,k}+c_{r+1,2}v_{2,k}+\cdots + c_{r+1,r}v_{r,k}+ v_{r+1,k}=0.
\end{cases}
\end{align}

Since Equations \eqref{csvs} hold for every $k$ in the range $\rho+1\leq k\leq n$, we deduce that 
\begin{align}\label{cs-vs}
\begin{cases}
c_{t,1}\ve^{1,2}+c_{t,1}\ve^{2,2}+\cdots + c_{t,r}\ve^{r,2}+ \ve^{t,2}=0\\
\vdots\\
c_{k,1}\ve^{1,2}+c_{k,2}\ve^{2,2}+\cdots + c_{k,r}\ve^{r,2}+ \ve^{k,2}=0\\
\vdots\\
	c_{r+1,1}\ve^{1,2}+c_{r+1,2}\ve^{2,2}+\cdots + c_{r+1,r}\ve^{r,2}+ \ve^{r+1,2}=0.
\end{cases}
\end{align}

Let $i$ be in the range $1\leq i\leq n$. Multiplying each of the equations in \eqref{cs-vs} with $\ve^{i,2}$ yields the following:
\begin{align*}
\begin{cases}
c_{t,1}P_{1,i}+\cdots+c_{t,j}P_{j,i}+\cdots + c_{t,r}P_{r,i}+ P_{t,i}=0\\
\vdots\\
c_{k,1}P_{1,i}+\cdots + c_{k,j}P_{j,i}+\cdots + c_{k,r}P_{r,i}+ P_{k,i}=0\\
\vdots\\
c_{r+1,1}P_{1,i}+\cdots+c_{r+1,j}P_{j,i}+\cdots + c_{r+1,r}P_{r,i}+ P_{r+1,i}=0.
\end{cases}
\end{align*}

Writing the above equations in the matrix form, we get
\begin{align}
\begin{bmatrix}\label{cs-Ps}
	c_{r+1, 1}  & \cdots & c_{r+1, r}& 1 & 0& \cdots&  0\\
	\vdots  & \vdots& \vdots & \vdots& \vdots & \vdots &\vdots \\ 
	c_{k, 1}  & \cdots & c_{k, r}& 0 &\cdots& 1 & 0\\
	c_{t,1} & \cdots & c_{t,r}& 0 &\cdots& 0 & 1\\
\end{bmatrix}
\begin{bmatrix}
P_{i, 1}\\
P_{i, 2}\\
\vdots\\
P_{i, t}
\end{bmatrix}
=
\left[
\begin{array}{c|c}
	Z& I
\end{array}
\right]
\begin{bmatrix}
	P_{i, 1}\\
	P_{i, 2}\\
	\vdots\\
	P_{i, t}
\end{bmatrix}
=
0.\end{align}

Furthermore, since  $P\mathbf{c}_j=\mathbf{c}_j$, 
we deduce that $\mathbf{p}^i \mathbf{c}_j=c_{i, j}$, for all $i$ in the range $1\leq i\leq n$ and $j$ in the range $1\leq j\leq r$. We   deduce that
\begin{align*}
\begin{cases}
	-P_{i, 1}+c_{r+1,1} P_{i, r+1}+\cdots + c_{t,1}P_{i, t}&= c_{i, 1}\\
	\vdots\\
	-P_{i,j}+c_{r+1,j} P_{i, r+1}+\cdots + c_{t,j}P_{i, t,}&= c_{i, j}\\
	\vdots&\vdots \\
	-P_{i, r}+c_{r+1,r} P_{i, r+1}+\cdots + c_{t,r}P_{i, t}&= c_{i, r}.
\end{cases}
\end{align*}
Hence, 
\begin{align}\label{cs-Ps-third}
\left[
\begin{array}{c|c}
	-I &Z^T
\end{array}
\right]
\begin{bmatrix}
	P_{i, 1}\\
	P_{i, 2}\\
	\vdots\\
	P_{i, t}
\end{bmatrix}
=
\begin{bmatrix}
	c_{i, 1}\\
	c_{i, 2}\\
	\vdots\\
	c_{i, r }
\end{bmatrix}.
\end{align}

Note that the Equations \eqref{cs-Ps} and \eqref{cs-Ps-third} can be put together and 
written in the matrix form as 
\begin{align}\label{BPc}
B
\begin{bmatrix}
P_{i,1} \\
\vdots \\
P_{i,t}
\end{bmatrix}
=\begin{bmatrix}
	c_{i, 1}\\
	\vdots\\
	c_{i, r }\\
	0\\
	\vdots\\
	0
\end{bmatrix},
\end{align}
where 
$$
B=
\left[
\begin{array}{c|c}
-I_r & Z^T \\
\hline
Z &I_{t-r}
\end{array}
\right],
$$ 
Note that, by Lemma \ref{cor-inv}, $B$ is invertible.
Also if  $i$ is in the range $t+1\leq i\leq n$, we have  $\mathbf{c}^i=0$.
Hence, $P_{i,j}=0$, for all $ t+1\leq i\leq n$ and $1\leq j\leq t$.
\end{proof}

We can determine whether   columns $\F_i$ and $\F_j$ are in different clusters  using Theorem \ref{main1} by checking whether $P_{i,j}$ is zero or not. However, as  shown in Example \ref{ex2}, the converse is not true. In other words, within a cluster we could have columns $\F_i$ and $\F_j$ such that $P_{i,j}=0$. The next lemma sheds some lights on the structure of $P_{i,j}$'s for columns within a cluster.

\begin{lemma}\label{Pij}
	For every $i$ in the range $1\leq i\leq t$ there exists $1\leq j\neq i\leq t$ such that  $P_{i,j}\neq 0$.
\end{lemma}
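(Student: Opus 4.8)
The plan is a proof by contradiction. Assume there is an index $i$ with $1\le i\le t$ such that $P_{i,j}=0$ for every $j\in\{1,\dots,t\}\setminus\{i\}$. By Theorem~\ref{main1} we already have $P_{i,j}=0$ for all $j\ge t+1$, so the whole $i$-th row of $P$ collapses to $P_{i,i}\,\mathbf e_i^{T}$; since $P^{T}=P$, the $i$-th column of $P$ is likewise $P_{i,i}\,\mathbf e_i$. The aim is then to deduce from this that $\F_i$ must be ``detached'' from $\tau$ — namely, either the zero column or a column independent of all the others — which contradicts $\F_i$ belonging to the cluster.

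First I would use that $P$ is an orthogonal projection, in particular $P^{2}=P$. Writing $\mathbf p^{i}$ for the $i$-th row of $P$, the identity $\mathbf p^{i}=\mathbf p^{i}P$ together with $\mathbf p^{i}=P_{i,i}\,\mathbf e_i^{T}$ gives $P_{i,i}\,\mathbf e_i^{T}=P_{i,i}\,\mathbf e_i^{T}P=P_{i,i}^{2}\,\mathbf e_i^{T}$, hence $P_{i,i}\in\{0,1\}$. If $P_{i,i}=1$, then $P\mathbf e_i=\mathbf e_i$, so $\mathbf e_i$ lies in the range $\mathcal N(A)$ of $P$, i.e. $\F_i=A\mathbf e_i=0$. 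If $P_{i,i}=0$, then the entire $i$-th row of $P$ (equivalently of $S$) is zero, so by Theorem~\ref{indp-thm2} the column $\F_i$ is independent of all the remaining columns of $A$.

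It then remains to rule out both possibilities under the standing hypotheses on $\tau$ (here one tacitly assumes $t\ge 2$, since for $t=1$ the index range for $j$ is empty, and that no $\F_k$ in $\tau$ is the zero column, which otherwise is a cluster on its own). If $\F_i$ is independent of every other column of $A$, then $\F_i\notin\operatorname{span}(\tau\setminus\{\F_i\})$, so $\dim\operatorname{span}(\tau\setminus\{\F_i\})=(t-r)-1$; since $r\ge 1$, the $t-1$ columns of $\tau\setminus\{\F_i\}$ are still linearly dependent, and one checks that they remain independent of their complement in $\mathcal T$. Thus $\tau$ would strictly contain a smaller dependent, isolated set of columns, which is impossible if $\tau$ is a single (indecomposable) cluster; the case $\F_i=0$ leads to the same contradiction. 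This gives the lemma.

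The step I expect to be the real obstacle is precisely this last one: converting ``$\F_i$ is $0$ or independent of everything else'' into an honest contradiction. It relies on reading ``cluster'' as an \emph{indecomposable} (equivalently, minimal nonzero dependent-and-isolated) set of columns rather than ``maximally dependent'' in a loose sense, and on disposing of the degenerate cases; by contrast the algebra with $P^{2}=P$ and Theorem~\ref{main1} is routine. An alternative, more computational route would avoid the projection identity and instead reuse the matrix $B$ and its inverse from the proof of Theorem~\ref{main1}: the same linear system $B\,[\,P_{i,1}\ \cdots\ P_{i,t}\,]^{T}=[\,c_{i,1}\ \cdots\ c_{i,r}\ 0\ \cdots\ 0\,]^{T}$, now with $i\le t$, shows via Lemma~\ref{cor-inv} and the explicit block form of $B^{-1}$ that the $\tau$-portion of row $i$ of $P$ can be off-diagonally zero only when the $i$-th column (or matching row) of $Z$ vanishes, i.e. only when $\F_i$ does not genuinely participate in any dependence inside $\tau$.
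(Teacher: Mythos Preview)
Your proof is correct and takes a genuinely different route from the paper's. The paper works directly with the explicit linear relations \eqref{cs-Ps} extracted during the proof of Theorem~\ref{main1}: for $r+1\le i\le t$ the identity $c_{i,1}P_{1,i}+\cdots+c_{i,r}P_{r,i}+P_{i,i}=0$ together with $P_{i,i}\neq 0$ (Corollary~\ref{Pii}) forces some $P_{j,i}\neq 0$ with $1\le j\le r$; for $1\le i\le r$ one first picks $k>r$ with $c_{k,i}\neq 0$ and uses the $k$-th row of \eqref{cs-Ps} in the same way. Your argument is more conceptual: using only $P^{2}=P$, $P^{T}=P$ and Theorem~\ref{main1}, you reduce a hypothetical off-diagonally-zero row to $P_{i,i}\in\{0,1\}$, and then read these two values via Theorem~\ref{indp-thm2} and $\Range(P)=\mathcal N(A)$ as ``$\F_i$ independent of all other columns'' or ``$\F_i=0$''. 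This bypasses the matrices $\bar C$, $Z$, $B$ entirely and in fact already contains the content of Lemma~\ref{Pii1}, which the paper derives afterwards \emph{from} the present lemma. The cost is precisely the point you flag: one needs that each $\F_i\in\tau$ genuinely participates in a dependence. The paper's proof carries the very same implicit hypothesis, through Corollary~\ref{Pii} and the unproved claim ``there is $k$ with $c_{k,i}\neq 0$'' (which fails exactly when $\F_i=0$), so this is not a defect of your approach relative to the original. Your closing remark about the alternative computational route via $B^{-1}$ is essentially the paper's own argument.
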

\begin{proof}
 If $1\leq i\leq r$ then  there is  $k$ in the range 
	$r+1\leq k\leq t$ such that $c_{k,i}\neq 0$. Now 
	consider the following equation from \eqref{cs-Ps} 
	\begin{align*}
		c_{k,1}P_{1,i}+\cdots+ c_{k,i}P_{i,i}+\cdots + c_{k,r}P_{r,i}+ P_{k,i}=0.
	\end{align*}
Since $c_{k,i}P_{i,i}\neq 0$, there exists $j\neq i$ such that $P_{i,j}\neq0$.
	On the other hand if $r+1\leq i\leq t$ then  
	consider the following equation from \eqref{cs-Ps} 
	\begin{align*}
		c_{i,1}P_{1,i}+\cdots+ c_{i, j}P_{j,i}+\cdots + c_{i,r}P_{r,i}+ P_{i,i}=0.
	\end{align*}
	Since $P_{i,i}\neq 0$, we deduce that there exists $j$ such that $c_{i,j}P_{j,i}\neq 0$, hence $P_{i,j}\neq 0$.
\end{proof}

\begin{lemma}\label{Pii1}
	For every $i$ in the range $1\leq i\leq t$, we have $P_{i,i}<1$.
	\end{lemma}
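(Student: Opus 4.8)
The plan is to exploit the fact, established in Lemma~\ref{SP} and the discussion preceding it, that $P$ is the orthogonal projection onto $\mathcal{N}(A)$; in particular $P^2=P$ and $P^T=P$. Combining idempotence and symmetry gives, for any index $i$,
\[
P_{i,i}=(P^2)_{i,i}=\sum_{j=1}^{n}P_{i,j}P_{j,i}=\sum_{j=1}^{n}P_{i,j}^{\,2},
\]
and isolating the diagonal term yields
\[
P_{i,i}(1-P_{i,i})=\sum_{j\neq i}P_{i,j}^{\,2}\;\geq\;0 .
\]

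Next I would invoke Lemma~\ref{Pij}: for every $i$ with $1\leq i\leq t$ there is some $j\neq i$ with $P_{i,j}\neq 0$, so the right-hand side above is strictly positive, hence $P_{i,i}(1-P_{i,i})>0$. It then remains only to pin down the signs of the two factors. Since $P_{i,i}=\langle \ve^{i,2},\ve^{i,2}\rangle=\|\ve^{i,2}\|^{2}\geq 0$, and since $\|\ve^{i,1}\|^{2}+\|\ve^{i,2}\|^{2}=\|\ve^{i}\|^{2}=1$ because $\ve^{i}$ is a row of the orthogonal matrix $V$, we have $0\leq P_{i,i}\leq 1$. A positive product of two nonnegative reals forces both of them to be positive, so in particular $1-P_{i,i}>0$, that is $P_{i,i}<1$.

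There is essentially no difficult step here; the argument is three or four lines once one notices the identity $P_{i,i}=\sum_j P_{i,j}^{\,2}$. The only point that genuinely needs attention is that one must use the upper bound $P_{i,i}\leq 1$ (not merely $P_{i,i}\geq 0$) in order to deduce from $P_{i,i}(1-P_{i,i})>0$ that $P_{i,i}<1$ rather than $P_{i,i}>1$, and this bound is immediate from the orthogonality of $V$. I would present the steps in the order: (i) derive $P_{i,i}=\sum_j P_{i,j}^{\,2}$ from $P^2=P=P^T$; (ii) peel off the $j=i$ term; (iii) apply Lemma~\ref{Pij} to make the remaining sum strictly positive; (iv) use $0\leq P_{i,i}\leq 1$ to conclude.
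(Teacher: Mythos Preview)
Your proof is correct and follows essentially the same logic as the paper's: both hinge on Lemma~\ref{Pij} together with the bound $0\le P_{i,i}\le 1$ coming from orthogonality of the rows of $V$. The only cosmetic difference is that the paper argues by contradiction through the SVD coordinates (if $P_{i,i}=1$ then $\ve^{i,1}=0$, whence $P_{i,j}=\langle \ve^{i,2},\ve^{j,2}\rangle=-\langle \ve^{i,1},\ve^{j,1}\rangle=0$ for all $j\neq i$), whereas you extract the same implication from the idempotence identity $P_{i,i}=\sum_j P_{i,j}^{2}$; the two computations are equivalent and equally short.
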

\begin{proof}
	We know that $P_{i,i}=\langle \ve^{i,2}, \ve^{i,2}\rangle=\langle \ve^{i}, \ve^{i}\rangle-\langle \ve^{i,1}, \ve^{i,1}\rangle=1-\langle \ve^{i,1}, \ve^{i,1}\rangle$. Suppose that
	 $P_{i,i}=1$. Then we must have  $\langle \ve^{i,1}, \ve^{i,1}\rangle=0$ which in turn implies that $\ve^{i,1}=0$. But then 
	 \begin{align*}
	 0=	\langle \ve^{i}, \ve^{j}\rangle=\langle\ve^{i,1}, \ve^{j,1}\rangle+
	 \langle\ve^{i,2}, \ve^{j,2}\rangle= P_{i,j},
	 	\end{align*}
 	for all $j\neq i$. This contradicts Lemma \ref{Pij}. Hence, $P_{i,i}<1$.

	\end{proof}

We might wonder what can be said about the magnitude of each $P_{i,j}$. The next result provides an upper bound even though tighter bounds might be possible.
\begin{lemma}\label{Pij2}
	For every $i,j$ in the range $1\leq i,j\leq n$, we have  $| P_{i,j}| \leq 2$.
\end{lemma}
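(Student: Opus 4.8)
The plan is to read off the entries of $P=S$ from the computation already carried out in the proof of Lemma~\ref{SP}, where it is shown that $P_{i,j}=S_{i,j}=\e_i^T\e_j-\langle \ve^{i,1},\ve^{j,1}\rangle$ for all $i,j$, and then to estimate the two terms on the right-hand side separately using the triangle inequality. So first I would note that $\e_i^T\e_j\in\{0,1\}$, hence $|\e_i^T\e_j|\le 1$. Next, since $V$ is orthogonal, every row $\ve^i$ of $V$ is a unit vector, and therefore $\|\ve^{i,1}\|\le\|\ve^i\|=1$ (the vector $\ve^{i,1}$ being a sub-block of $\ve^i$). By the Cauchy--Schwarz inequality, $|\langle \ve^{i,1},\ve^{j,1}\rangle|\le\|\ve^{i,1}\|\,\|\ve^{j,1}\|\le 1$. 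Combining these two estimates yields $|P_{i,j}|\le|\e_i^T\e_j|+|\langle\ve^{i,1},\ve^{j,1}\rangle|\le 1+1=2$, which is exactly the asserted bound.

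There is essentially no obstacle in this argument; the only thing to get right is to invoke the representation $P_{i,j}=\e_i^T\e_j-\langle\ve^{i,1},\ve^{j,1}\rangle$ rather than trying to estimate $I-A^\dagger A$ entrywise from scratch. The work has already been done inside Lemma~\ref{SP}, so the present lemma is really a one-line consequence of it together with Cauchy--Schwarz and the unit-norm property of the rows of $V$.

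I would also remark, as the statement itself anticipates, that this estimate is not sharp: applying the same reasoning to the alternative expression $P_{i,j}=\langle\ve^{i,2},\ve^{j,2}\rangle$ for $i\neq j$ (again obtained in Lemma~\ref{SP}), combined with $P_{i,i}=\|\ve^{i,2}\|^2\le 1$, gives the stronger bound $|P_{i,j}|\le 1$. For the applications to perturbation bounds later in the paper the crude estimate $|P_{i,j}|\le 2$ is all that is needed, so I would present only the short triangle-inequality proof and leave the sharper bound as a parenthetical comment.
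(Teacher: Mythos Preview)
Your proof is correct and takes a slightly different route from the paper. The paper argues via the operator norm: since $S=I-A^\dagger A$ satisfies $\|S\|_2\le\|I\|_2+\|A^\dagger A\|_2\le 2$, every row of $S$ has $2$-norm at most $2$, and hence every entry satisfies $|S_{i,j}|\le 2$ (the diagonal case being handled separately by Lemma~\ref{Pii1}). You instead extract the explicit entry formula $P_{i,j}=\e_i^T\e_j-\langle\ve^{i,1},\ve^{j,1}\rangle$ from Lemma~\ref{SP} and apply Cauchy--Schwarz together with the unit-row property of the orthogonal matrix $V$. Your argument is a bit more direct, treating the diagonal and off-diagonal cases uniformly and avoiding the detour through operator norms. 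Your remark that the alternative formula $P_{i,j}=\langle\ve^{i,2},\ve^{j,2}\rangle$ immediately yields the sharper bound $|P_{i,j}|\le 1$ is also correct and worth keeping as a parenthetical; indeed this is just the standard fact that entries of an orthogonal projection lie in $[-1,1]$.
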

\begin{proof}
	Consider the matrix  $S=I-A^\dagger A$. Note that $\parallel S \parallel \leq \parallel I \parallel + \parallel  A^\dagger A\parallel\leq 2$. Hence, norm of each row or column  of $S$ is at most 2.
	If $i\neq j$ then, by Lemma \ref{indep}, we have $|P_{i,j}|=|S_{i,j}|\leq \mathbf{s}^i\leq 2$. If $i=j$ then the result follows from Lemma \ref{Pii1}.
\end{proof}

Now we define a graph $G$ whose vertices consists of $\F_1, \ldots, \F_n$ and we define an edge between $\F_i$ and $\F_j$ if and only if $P_{i, j}\neq 0$. Let us consider the subgraph of $G$ consisting of $\F_1, \ldots, \F_n$ and the corresponding nodes. Our goal is to show in Theorem \ref{graph} that this subgraph is connected.

	Consider the matrix $A$ in Example \ref{ex2}  and let $\Gamma$ be subgraph consisting of nodes $\F_1,\ldots, \F_6$. If we consider the bipartite graph  where the set of nodes is partitioned to 
	subsets $\{\F_1, \ldots, \F_4\}$ and $\{\F_5, \F_6\}$, then we note that $\Gamma$ is a completed version of this  bipartite graph.
	
The idea in the general case is to show there is connectivity between nodes in the sets $\{\F_1,\ldots, \F_r\}$ and $\{\F_{r+1}, \ldots, \F_t\}$.
The next lemma is an step in that direction.

\begin{lemma}\label{Pkj} The following statements hold.
\begin{enumerate}
	\item 	For every $k$ in the range $r+1\leq k\leq t$ there exists $1\leq j\leq r$ such that  $P_{k,j}\neq 0$.
	\item For every  $i$ in the range $1\leq i\leq r$ there exists $r+1\leq j\leq t$ such that  $P_{i,j}\neq 0$.
\end{enumerate}
\end{lemma}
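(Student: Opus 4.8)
The plan is to prove both statements by contradiction, reusing the two blocks of linear identities for the entries $P_{i,1},\ldots,P_{i,t}$ that were established inside the proof of Theorem \ref{main1}, together with the nonvanishing of the relevant diagonal entries of $P$. Concretely, recall from that proof that for \emph{every} free index $i$ with $1\le i\le n$ one has the homogeneous block \eqref{cs-Ps}, namely $\sum_{j=1}^r c_{m,j}P_{i,j}+P_{i,m}=0$ for each $m$ with $r+1\le m\le t$ (this is the block coming from $A\ve_k=0$ and the normalized relations), and the inhomogeneous block \eqref{cs-Ps-third}, namely $-P_{i,j}+\sum_{m=r+1}^{t}c_{m,j}P_{i,m}=c_{i,j}$ for each $j$ with $1\le j\le r$ (this is the block coming from $P\c_j=\c_j$). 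Also recall that $\bar C$ has the normalized form \eqref{matrixC}, so that for $1\le i,j\le r$ the entry $c_{i,j}$ equals $-1$ when $i=j$ and $0$ otherwise. Both parts then follow by specializing to an appropriate free index and reading off a single diagonal entry of $P$.

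For part (1), fix $k$ with $r+1\le k\le t$ and suppose, for contradiction, that $P_{k,j}=0$ for all $1\le j\le r$. Using symmetry of $P$ and applying \eqref{cs-Ps} with free index $i=k$, the $m$-th equation becomes $\sum_{j=1}^{r}c_{m,j}P_{k,j}+P_{k,m}=0$; since the first $r$ summands vanish by assumption, this forces $P_{k,m}=0$ for all $r+1\le m\le t$, in particular $P_{k,k}=0$. This contradicts Corollary \ref{Pii}, so there must exist $1\le j\le r$ with $P_{k,j}\neq 0$. For part (2), fix $i$ with $1\le i\le r$ and suppose, for contradiction, that $P_{i,j}=0$ for all $r+1\le j\le t$. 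Applying \eqref{cs-Ps-third} with this free index $i$, the $j$-th equation becomes $-P_{i,j}+\sum_{m=r+1}^{t}c_{m,j}P_{i,m}=c_{i,j}$; the sum vanishes by assumption, and since $c_{i,j}=-1$ for $j=i$ and $0$ otherwise, we get $P_{i,j}=0$ for $j\neq i$ and $P_{i,i}=1$. This contradicts Lemma \ref{Pii1} (which asserts $P_{i,i}<1$ for $1\le i\le t$), so there must exist $r+1\le j\le t$ with $P_{i,j}\neq 0$.

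The proof is essentially a one-line substitution in each case; the only point requiring care is picking the correct block of identities: \eqref{cs-Ps} is the one that expresses the "free-block" entries $P_{i,m}$ ($m>r$) in terms of the "dependent-block" entries via $Z$, while \eqref{cs-Ps-third} is the one that carries the inhomogeneous term $c_{i,j}$ and is therefore the one that detects $P_{i,i}=1$. I also need the normalized shape \eqref{matrixC} of $\bar C$ to pin down $c_{i,j}=-1$ on the diagonal for $i,j\le r$, which is immediate. This lemma is precisely the bridge between the index sets $\{1,\ldots,r\}$ and $\{r+1,\ldots,t\}$ that, combined with Lemma \ref{Pij}, will be used in Theorem \ref{graph} to conclude that the subgraph of $G$ on $\F_1,\ldots,\F_t$ is connected.
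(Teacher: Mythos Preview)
Your proof is correct and follows essentially the same approach as the paper: for Part~(1) both you and the paper use the relation coming from \eqref{cs-Ps} (the paper packages it as $P_4=-ZP_2$) together with Corollary~\ref{Pii}, and for Part~(2) both use the $j=i$ equation of \eqref{cs-Ps-third} together with Lemma~\ref{Pii1}. The only cosmetic difference is that you phrase both parts as contradiction arguments, whereas the paper argues directly that a particular summand must be nonzero; the underlying identities and the contradictions reached are identical.
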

\begin{proof}
By Equation \ref{BPc}, we have

\begin{align*}
B
\begin{bmatrix}
P_{i,1} \\
\vdots \\
P_{i,t}
\end{bmatrix}
=\begin{bmatrix}
c_{i, 1}\\
\vdots\\
c_{i, r }\\
0\\
\vdots\\
0
\end{bmatrix}
=(\bar \c^i)^T,
\end{align*}
where $1\leq i\leq  t$ and $\bar \c^i$ is the $i$-th row of $\bar C$. Hence,

  \begin{align}\label{BP}
B
\begin{bmatrix}
P_{1, 1}& \cdots & P_{t,1}\\
\vdots & \vdots &\vdots\\
P_{1,t}& \cdots & P_{t, t}\\
\end{bmatrix}=
B
\left[
\begin{array}{c|c}
	P_1&P_2\\
	\hline
	P_3 &P_4
\end{array}
\right]=
\left[
\begin{array}{c|c}
-I_r& Z^T\\
\hline
0 &0
\end{array}
\right]=\bar C^T,
\end{align}
where 
$
P_1=
\begin{bmatrix}
P_{1, 1}& \cdots & P_{r,1}\\
\vdots & \vdots &\vdots\\
P_{1,r}& \cdots & P_{r,r}\\
\end{bmatrix}
$ 
and 
$
P_2=
\begin{bmatrix}
P_{r+1, 1}& \cdots & P_{t,1}\\
\vdots & \vdots &\vdots\\
P_{r+1,r}& \cdots & P_{t,r}\\
\end{bmatrix}
$.  Thus,
$$
\left[
\begin{array}{c|c}
P_1&P_2\\
\hline
P_3 &P_4
\end{array}
\right]
=B^{-1}
\left[
\begin{array}{c|c}
-I_r& Z^T\\
\hline
0 &0
\end{array}
\right]
$$
where 
\begin{align*}
B^{-1}
=\left[
\begin{array}{cc}
-(I+Z^TZ)^{-1}& (I+Z^TZ)^{-1} Z^T\\
Z(I+Z^TZ)^{-1} & (I+ZZ^T)^{-1}
\end{array}
\right].
\end{align*}

We deduce that 
\begin{align}\label{P4-P2}
	 P_4=-ZP_2.
	\end{align}
	
Note that, by Corollary \ref{Pii},  $P_{k, k}\neq 0$, for every $r+1\leq k\leq t$. By \eqref{P4-P2}, we have $P_{k,k}=\sum_{j=1}^r c_{k,j}P_{j,k}$.  We deduce that there exists $j$ such that 
$c_{k, j}P_{k, j}\neq 0$. Hence, $P_{k, j}\neq 0$ and this proves Part (1).  To prove Part (2), 
consider the following equation  from \eqref{cs-Ps-third}  where we choose $i=j$:
\begin{align*}
	-P_{i,i}+c_{r+1,i} P_{i, r+1}+\cdots + c_{t,i}P_{i, t}&= c_{i, i}=-1.
\end{align*}
Since, by Lemma \ref{Pii1}, $P_{i,i}<1$, we deduce that there exists $r+1\leq j\leq t$ such that 
$c_{j,i}P_{i, j}\neq 0$. Hence, $P_{i, j}\neq 0$.
\end{proof}

Now we are ready to finish the proof that $\{\F_1, \ldots, \F_t\}$ is a maximally dependent subset if and only if the subgraph associated with this subset is connected.

\begin{theorem}\label{graph}
	The sub-graph of $G$ consisting of nodes $\F_1,\ldots, \F_t$  and corresponding edges is connected.
\end{theorem}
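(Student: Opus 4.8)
The plan is to argue by contradiction, using that $P$ is an orthogonal projection to turn a disconnection of the graph into a genuine splitting of $\tau$, which is incompatible with $\tau$ being a single cluster. First I would isolate the relevant matrix: let $Q$ be the $t\times t$ leading principal submatrix $(P_{i,j})_{1\le i,j\le t}$ of $P$. By Theorem~\ref{main1} together with $P^{T}=P$, every entry of $P$ linking an index in $\{1,\dots,t\}$ to an index in $\{t+1,\dots,n\}$ vanishes, so $P$ has block-diagonal form with a $t\times t$ block $Q$ and a complementary block $P''$; since $P^{2}=P$ and $P^{T}=P$ this forces $Q^{2}=Q$ and $Q^{T}=Q$, i.e.\ $Q$ is the orthogonal projection of $\mathbb{R}^{t}$ onto $\Range(Q)$. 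Using $P\mathbf{w}=\mathbf{w}$ for $\mathbf{w}\in\mathcal N(A)$ and the block form, I would identify $\Range(Q)$ with the space of dependence relations of $A$ supported on $\tau$, namely $\mathcal N(A)\cap\{\mathbf{w}: w_i=0 \text{ for } i>t\}$; by construction this has dimension $t-\dim\mathrm{span}(\tau)=r$ and is spanned by the zero-padded columns of the matrix $\bar C$ from~\eqref{matrixC}.

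Now suppose, toward a contradiction, that the subgraph on $\F_1,\dots,\F_t$ is disconnected. Then $\{1,\dots,t\}$ partitions as $\alpha\sqcup\beta$ with $\alpha,\beta$ nonempty and $P_{i,j}=0$ for all $i\in\alpha$, $j\in\beta$; thus $Q$ is block diagonal along this partition. Since $Q$ is the orthogonal projection onto $\Range(Q)$, block diagonality along a coordinate partition forces the range itself to split along that partition: $\Range(Q)=\mathcal N_\alpha\oplus\mathcal N_\beta$, where $\mathcal N_\gamma$ is the set of dependence relations of $\tau$ supported entirely on $\{\F_i:i\in\gamma\}$. The same vanishing propagates to all of $P$: for $i\in\alpha$ the $i$-th row of $P$ is supported on $\alpha$ (by the disconnection for indices in $\beta$, and by Theorem~\ref{main1} for indices $>t$), so $\mathcal N(A)$ is the direct sum of $\mathcal N_\alpha$ and the subspace of $\mathcal N(A)$ all of whose $\alpha$-coordinates vanish. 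This is exactly the statement that $\{\F_i:i\in\alpha\}$ is independent of the remaining columns of $A$, and symmetrically for $\beta$.

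To finish, I would rule out this decomposition. If $\mathcal N_\alpha=0$, then $\{\F_i:i\in\alpha\}$ is linearly independent and, by the previous paragraph, independent of every other column of $A$; hence each such $\F_i$ is independent of all the other columns, so $P_{i,i}=0$ by Theorem~\ref{indp-thm2}, contradicting Corollary~\ref{Pii}. Therefore $\mathcal N_\alpha\neq0$, and likewise $\mathcal N_\beta\neq0$. But then $\{\F_i:i\in\alpha\}$ and $\{\F_i:i\in\beta\}$ are each linearly dependent and independent of the rest of the columns --- proper, nonempty sub-clusters of $\tau$ --- which contradicts that $\tau$ is a single cluster. Hence the subgraph on $\F_1,\dots,\F_t$ is connected.

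The step I expect to be the main obstacle is the passage from ``$Q$ is block diagonal along a coordinate partition'' to ``$\mathcal N(A)$ splits along that partition'' and then to the (in)dependence of column subsets --- that is, correctly invoking the dictionary between direct-sum decompositions of $\ker A$ and independence statements about subsets of columns --- because the rest of the argument rests entirely on it. It is also where the precise meaning of ``cluster'' enters: the closing contradiction needs that $\tau$ cannot be partitioned into two proper nonempty sub-clusters, i.e.\ that the cluster is irreducible (equivalently, that the matroid on $\{\F_1,\dots,\F_t\}$ is connected). The remaining pieces --- the identification of $\Range(Q)$, the dimension count, and the manipulations with $\bar C$ --- are routine.
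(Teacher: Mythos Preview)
Your argument is correct and takes a cleaner, more conceptual route than the paper. The paper proceeds by explicit matrix algebra: after assuming a disconnection $Y_1\sqcup Y_2$, it invokes Lemma~\ref{Pkj} to arrange that each $Y_i$ meets both $\{\F_1,\dots,\F_r\}$ and $\{\F_{r+1},\dots,\F_t\}$, and then uses the identity \eqref{BP}, namely $B\left[\begin{smallmatrix}P_1&P_2\\P_3&P_4\end{smallmatrix}\right]=\bar C^{T}$. From the block-diagonal shape of $P_1$ and $P_2$ it computes that $Z^{T}=-(I+Z^{T}Z)P_2$ is itself block diagonal, whence the relations encoded in $\bar C$ decouple and $Y_1$, $Y_2$ are independent of each other. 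Your route bypasses $B$, $Z$ and Lemma~\ref{Pkj} entirely by recognizing that the $t\times t$ principal block $Q$ is an orthogonal projection whose range is exactly the space of null-space vectors of $A$ supported on $\tau$; block diagonality of a projection along a coordinate partition then splits that range immediately. What the paper's computation buys is an explicit picture of how $Z$ (and hence $\bar C$) decouples; what your approach buys is a proof that does not depend on the particular parametrization of $\ker A$ via $\bar C$ or on the auxiliary invertible matrix $B$.

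Your caveat about the closing step is well placed and is not an artifact of your method: the paper ends with the very same contradiction you reach (that $Y_1$ and $Y_2$ are independent of each other), phrased there as contradicting ``the fact that $\{\F_1,\dots,\F_t\}$ is a linearly dependent set''. As you observe, that is only a contradiction once one reads ``cluster'' as an \emph{irreducible} maximally dependent set, since a disjoint union of two dependent sets is still dependent. So you have not introduced any gap that the paper's own proof does not already carry.
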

\begin{proof}
	Suppose to the contrary. Then  we can divide  the set $\{\F_1, \ldots, \F_t\}$ into two subsets $Y_1$ and $Y_2$ so that there is no edge between any  elements of $Y_1$ and any elements  of $Y_2$ . Without loss of generality  and in view of Lemma \ref{Pkj}, we can further assume that
	$Y_1=\{\F_1, \ldots, \F_p, \F_{r+1},\ldots,  \F_{q} \}$ and $Y_2=\{\F_{p+1}, \ldots, \F_r, \F_{q+1},\ldots,  \F_{t} \}$, where $1\leq p\leq r$ and $r+1\leq q\leq t$. 
	Recall from Equation \eqref{BP}  that 
	  \begin{align}
	B
	\left[
	\begin{array}{c|c}
	P1&P_2\\
	\hline
	P_3 &P_4
	\end{array}
	\right]=
	\left[
	\begin{array}{c|c}
	-I_r& Z^T\\
	\hline
	0 &0
	\end{array}
	\right],
	\end{align}
	where $P_1$ is a $r\times r$ matrix. 	From the structure of $Y_1$ and $Y_2$ it follows that $P_1$ and $P_2$ have  the form a $2\times 2$ block-diagonal matrix. Suppose that 
	\begin{align*}
	P_1=	\left[
		\begin{array}{c|c}
			Q_1&0\\
			\hline
			0 &Q_2
		\end{array}
		\right]
		\end{align*}
	Now, by Equation \eqref{BP}, we have $P_1=-(I+Z^TZ)^{-1}$. Hence, 
		\begin{align*}
		I+Z^TZ=	\left[
		\begin{array}{c|c}
			-Q_1^{-1}&0\\
			\hline
			0 &-Q_2^{-1}
		\end{array}
		\right].
	\end{align*} 

	Now consider the block-diagonal form of $P_2$:
\begin{align*}
	P_2=	\left[
	\begin{array}{c|c}
		Q_3&0\\
		\hline
		0 &Q_4
	\end{array}
	\right].
\end{align*}
Using  Equation \eqref{BP} again, we have $P_2=-(I+Z^TZ)^{-1}Z^T$.
Hence, 
\begin{align}\label{P2}
	Z^T{=} -(I+Z^TZ)P_2&=	
	\left[
	\begin{array}{c|c}
		-Q_1^{-1}&0\\
		\hline
		0 &-Q_2^{-1}
	\end{array}
	\right]
	\left[
	\begin{array}{c|c}
		Q_3&0\\
		\hline
		0 &Q_4
	\end{array}
	\right]\nonumber\\
	&=
		\left[
	\begin{array}{c|c}
		-Q_1^{-1}Q_3&0\\
		\hline
		0 &-Q_2^{-1}Q_4
	\end{array}
	\right].	
\end{align}

We deduce from the structure of matrix $\bar C$ in \eqref{matrixC} and Equation \eqref{P2}, that 
each of $\F_1, \ldots, \F_p$ can be written only in terms of  $\F_{r+1},\ldots,  \F_{q}$. 
In other words,  the two sets $Y_1$ and $Y_2$ are independent of each other which contradicts the fact that 
$\{\F_1, \ldots, \F_t\}$ is a linearly dependent set.
\end{proof}

\section{Applications}\label{applications}

Clustering is the process  of dividing  data points into a number of groups (clusters) such that data points in the same cluster share   similar properties.
We can view the data as a matrix by storing the data points as rows or columns of the matrix. 
We can distinguish independent columns using Theorem \ref{indp-thm2}. For the columns that correlate with other columns, we can use Theorem \ref{graph} to find the cluster that includes that column.
Algorithm \ref{alg-clustering} provides a procedure to find the clusters.

	\begin{algorithm}
		\label{alg-clustering}
		\caption{Find the clusters}
		\begin{algorithmic}
			\STATE \textbf{Input:} an $m\times n$ matrix $A$ 
            \STATE \textbf{Output:} Subsets of linearly dependent columns of $A$
			\STATE{  $S=I-A^{\dagger}A$}
			\STATE{ $\text{Idx}=\{i\mid S_{i,i}\neq 0, 1\leq i\leq n\}$}
			\STATE{Update $A$ with selecting those columns with indices in $\text{Idx}$}
			\STATE{Update $S$}
			\STATE $k=1$
			\WHILE{$\text{Idx}\neq \emptyset$}
		        	\STATE{ $\text{CL}[k]=\emptyset$}
					\STATE  $n=\text{Idx[last]}, i=\text{idx}[1],\text{CL}[k]=\{i\},j=i+1$
					\WHILE{$i\leq n$}
							\WHILE{$j\leq n$}
						        	\IF{$S_{i,j}\neq 0$}
				        		  \STATE{ $\text{CL}[k] = \text{CL}[k]\cup \{j\}$}
				        		  \STATE{$\text{Idx=idx}\setminus\{j\}$}
						        	\ENDIF			
						        	\STATE{ $j=\text{ind}[j+1]$}
							\ENDWHILE
							
							\STATE{ $i=\text{CL}[k][i+1], j=1$}
					\ENDWHILE
					\STATE $k=k+1$
			\ENDWHILE
			
		\end{algorithmic}
	\end{algorithm}

Supervised learning is a central problem in machine learning and data mining \cite{burkov2019hundred}. Let  $D=[A\mid \mathbf{b}]$ be a dataset, say a binary Cancer dataset,  where rows of $A$ are samples (patients), columns of $A$ are features (gene expressions) and    $\mathbf{b}$ is the class label that each of its entries are either 0 (noncancerous) or 1 (cancerous). The idea of supervised learning is to use part of samples as the training data to build a model that can be used to classify the remaining samples.

In large datasets that are a large number of features that are irrelevant, that is these features have negligible correlation with the class labels.  Irrelevant features act as noise in the data that not only they increase the  computational costs but in some cases divert the learning process toward weak model generation. The goal of feature selection methods is to select the most important and effective features \cite{guyon2003introduction}. So, feature selection can decrease the model complexity in the training phase while  retaining or  improving the classification accuracy.  

Since we know how to detect correlations, we consider the augmented matrix $D$ and form 
$S=I-D^{\dagger}D$. Rather than finding all clusters, we are interested to find only one cluster and that would be the one that contains $\mathbf{b}$. We explain these in more details in the following example.

\begin{ex}\label{ex4}
	Let $D=[A\mid \mathbf{b}]$, where $A$ is given in Example \ref{ex2} and we set 
	$\mathbf{b}=	15\F_3+9\F_9-3\F_{12}$. The last row (column) of the signature matrix $S_D=I-D^\dagger D$ of $D$  is as follows:
	\begin{equation*}
			\resizebox{\textwidth}{!}{
				$
	\left(\begin{array}{ccccccccccccccccccccccccccccccccccccccccc} 
	0.006 & 0.043 & -0.061 & 0 & -0.006 & 0.018 & -0.002 & -0.011 & 
	-0.002 & 0 & 0 & 0.02 & 0 & \cdots &0 & 0.006 \end{array}\right)
        $
     }
	\end{equation*}
Note that non-zero entries in this row represents columns of $A$ that correlate with $\b$.
	
\end{ex}

	\begin{algorithm}
	\label{alg-irr}
	\caption{Irrelevant feature removal}
	\begin{algorithmic}
		\STATE \textbf{Input:} a dataset $D=[A\mid \mathbf{b}]$ of size $m\times (n+1)$
		\STATE \textbf{Output:} Subsets of linearly dependent columns of $A$
		\STATE{  $S=I-D^{\dagger}D$}
		\STATE{  $\mathbf{s}=$ last row of $S$}
		\STATE{  $\text{TH}=\text{Ave(maxima}(\mid \mathbf{s}\mid))$ }
		\STATE{ $\text{Ind}=\{i\mid \mathbf{s}_i\geq \text{TH}, 1\leq i\leq n\}$}
		\STATE{Update $A$ as  $A=A_\text{Ind}$}
	\end{algorithmic}
\end{algorithm}

 Now, consider the linear system $A\textbf{x}=\textbf{b}$.  Since  $A\textbf{x}=\textbf{b}$ may not have exact solutions, instead we  find the unique solution with the smallest 2-norm that satisfy the least squares problem
\begin{align}
	\displaystyle  ||A\textbf{x} - \textbf{b}||_2,
\end{align}
over all $\textbf{x}$.  It is well-known that $\mathbf{x}=A^{\dagger}\textbf{b}$ is  the least squares with the smallest 2-norm, see \cite{golub2012matrix}.

\begin{ex}\label{ex3}
Consider the matrix  where $A$ given in Example \ref{ex2} and  set 
$\mathbf{b}=	15\F_3+9\F_9-3\F_{12}$. The least squares solutions to $A\mathbf{x}=\mathbf{b}$ is as follows:

	\begin{equation*}
\resizebox{\textwidth}{!}{
	$
\mathbf{x}=A^{\dagger}\mathbf{b}=	\left(\begin{array}{cccccccccccccccccccccccccccccccccccccccc} -0.9375 & -6.5625 & 9.375 & 0 & 0.9375 & -2.8125 & 0.33333 & 1.6667 & 0.33333 & 0 & 0 & -3.0 & 0 & \cdots & 0 \end{array}\right)
	$
}
\end{equation*}

Now we replace $\F_9$ with $\F'_9=\F_9+\mathbf{E}$ in $A$ where  $\mathbf{E}$ is a randomly generated column with normal distribution. Let us denote the perturbed $A$ with $\tilde{A}$.
and consider the least squares solutions to $\tilde A\mathbf{\tilde x}=\mathbf{b}$. We set 
$\Delta \mathbf{x}=\mid \mathbf{x}-\mathbf{\tilde x} \mid$ and note that 

	\begin{equation*}
\resizebox{\textwidth}{!}{
	$
	\Delta \mathbf{x}=\left(\begin{array}{cccccccccccccccccccccccccccccccccccccccc} 0 & 0 & 0 & 0 & 0 & 0 & 0.012821 & 0.064103 & 0.33333 & 0 & \cdots & 0 \end{array}\right)
	$
}
\end{equation*}
We note that features that are not in the same cluster as $\F_9$ will see no difference in their corresponding solutions of the original system and perturbed system.

We observe two phenomenon happening here. First, features that are irrelevant have their corresponding component in $\mathbf{x}$ equal to zero. That is,  features $\F_{11}, \F_{13}, \ldots, \F_{40}$ have their corresponding component to be zero in $\mathbf{x}$.  Second, if we perturb a feature $\F_i$ and consider the difference $\Delta \mathbf{x}$,  then features that are in different clusters than  $\F_i$ will have their corresponding component in $\Delta \mathbf{x}$ equal to zero. 
\end{ex}

 We also observe that this phenomenon occurs only for rank-deficient matrices as the following example shows. Let $A$ be an $m\times n$ matrix and  let us denote by $A_j$ the matrix obtained from $A$ by adding a random column vector $\mathbf{c}\in \mathbb{R}^m$ to $\mathbf{F_j}$. We realize that this kind of perturbation of $A$ can be expressed in terms of a rank-1 update of $A$. Consider the column vector  $\mathbf{e}_j\in \mathbb{R}^n$ as the $j$-th standard basis vector. It is easy to verify that $A_j=A+\mathbf{c}\mathbf{e}_j^T$.

\begin{ex}\label{ex-full-row}
	Let $A$ be a $8\times 11$ matrix constructed as follows. First we construct an $8\times 8$ random matrix and then set $\F_{9}=9\F_1, \F_{10}=10\F_2, \F_{11}=11\F_3$. We also set $\b=5\F_1+4\F_2-2\F_4$. Note that $\text{rank}(A)=8$. Here is an example of such an $A$:
	
		$$
\left(\begin{array}{ccccccccccc} 0.96 & 0.67 & 0.41 & 0.52 & 0.83 & 0.9 & 0.79 & 0.32 & 8.6 & 6.7 & 4.5\\ 0.3 & 0.13 & 0.64 & 0.62 & 1.0 & 0.37 & 0.93 & 0.65 & 2.7 & 1.3 & 7.1\\ 0.72 & 0.89 & 0.37 & 4.8e-3 & 0.4 & 0.012 & 0.86 & 0.58 & 6.5 & 8.9 & 4.0\\ 0.75 & 0.17 & 1.0 & 0.78 & 0.9 & 0.5 & 0.16 & 0.67 & 6.8 & 1.7 & 11.0\\ 0.19 & 0.54 & 0.71 & 0.21 & 0.53 & 0.44 & 0.081 & 0.79 & 1.7 & 5.4 & 7.8\\ 0.57 & 0.035 & 0.2 & 0.2 & 0.082 & 0.85 & 0.63 & 0.38 & 5.1 & 0.35 & 2.2\\ 0.061 & 0.81 & 0.83 & 0.79 & 0.42 & 0.5 & 0.097 & 0.079 & 0.55 & 8.1 & 9.1\\ 0.68 & 0.6 & 0.3 & 0.27 & 0.019 & 0.11 & 0.033 & 0.43 & 6.1 & 6.0 & 3.3 \end{array}\right)
	$$
	
	We perturb $\F_1$ and consider the perturbed system $(A+\mathbf{c}\mathbf{e}_1^T)\mathbf{\tilde x}=\b$, where $\c$ is a random column vector. We have 
	\begin{align*}
	\x&=\left(\begin{array}{ccccccccccc} 0.061 & 0.04 & 0 & -2.0 & 0& 0 & 0 & 0 & 0.55 & 0.4 & 0\end{array}\right)\\
	\mathbf{\tilde x}&= \left(\begin{array}{ccccccccccc} 0.54 & 0.035 & -6.1e-3 & -1.5 & 0.18 & 0.11 & 0.4 & 0.52 & 0.39 & 0.35 & -0.07 \end{array}\right) \\
	\Delta \x&=|\x -\mathbf{\tilde x}|{=}
\left(\begin{array}{ccccccccccc} 0.48 & 4.1e-3 & 6.1e-3 & 0.45 & 0.18 & 0.11 & 0.4 & 0.52 & 0.16 & 0.05 & 0.07 \end{array}\right).
		\end{align*}
	
		As we can see, even though $\F_1$ is only correlating with $\F_9$ but perturbations on $\F_1$ affects other components in $\Delta \x$. We can justify as follows: since $\text{rank}(A)=8$, any $\c\in \mathbb{R}^8$ is in the column space of $A$. Hence,  $\c$ can be written as a linear combination of $\F_1,\ldots, \F_{11}$. So $\F_1+\c$ will have a trace of other columns of $A$ as well. In turn, we can expect to see that some component of  the least square solution of  $A\x=\b$ can change.

\end{ex}

 Meyer in \cite{meyer1973generalized} provides the  pseudo-inverse of $A+\mathbf{c}\mathbf{e}_j^T$ that is expressed  in terms of $A^\dagger$ and some other matrices. We include this result here for convenience. 

Let $A\in \mathbb{C}^{m\times n}$, $\textbf{c}\in \mathbb{C}^m$ and $\textbf{d}\in \mathbb{C}^n$. Consider the following definitions:  $\textbf{k}=A^\dagger \textbf{c}, \textbf{h}=\textbf{d}^TA^\dagger, \textbf{u}=(I-AA^\dagger)\textbf{c}, \textbf{v}=\textbf{d}^T(I-A^\dagger A)$, and $\beta= 1+\textbf{d}^TA^\dagger \textbf{c}$. Throughout, we denote Moore- Penrose inverse of a vector $\x$ by 
$$
\x^\dagger=\frac{\x^T}{|| \x ||^2}.
$$

\begin{theorem}[Pseudo-Inverse of rank-1 update ]\label{psi} 
	Let $A\in \mathbb{C}^{m\times n}$, $\c\in \mathbb{C}^m$ and $\d\in \mathbb{C}^n$. Then the generalized inverse of $(A+\textbf{c}\textbf{d}^T)$ is as follows:
	\begin{enumerate}
		\item If $\u \neq 0$ and $\v \neq 0$, then $(A+\textbf{c}\textbf{d}^T)^\dagger = 
		A^\dagger - \k\u^{\dagger} - \v^{\dagger} \h + \beta \v^{\dagger} \u^{\dagger}$.
		\item If $\u = 0$, $\v \neq 0$, and $\beta = 0$, then 
		$(A+\textbf{c}\textbf{d}^T)^\dagger = A^\dagger - \k\k^\dagger A^\dagger - \v^\dagger \h$.
		\item If $\textbf{u} = 0$ and  $\beta \neq 0$, then 
		$(A+\textbf{c}\textbf{d}^T)^\dagger = A^\dagger + \dfrac{1}{\beta}\textbf{v}^T\textbf{k}^TA^\dagger - \dfrac{\beta}{\sigma_1}\textbf{p}_1\textbf{q}_1^T$, where $\textbf{p}_1= - (\dfrac{\parallel \textbf{k} \parallel^ 2}{\beta}\textbf{v}^T + \textbf{k} ), \textbf{q}_1^T= - (\dfrac{\parallel \textbf{v} \parallel^ 2}{\beta}\textbf{k}^TA^\dagger + \textbf{h} )$, and $\sigma_1= \parallel \textbf{k} \parallel ^2 \parallel \textbf{v} \parallel ^2 + \mid \beta\mid ^2$.
		
		\item If $\u\neq 0$, $\v = 0$, and $\beta = 0$, then 
		$(A+\textbf{c}\textbf{d}^T)^\dagger = A^\dagger - A^\dagger \h^\dagger \h - \k\u^\dagger$.
		\item If $\v = 0$ and  $\beta \neq 0$, then $(A+\textbf{c}\textbf{d}^T)^\dagger = A^\dagger + \dfrac{1}{\beta}A^\dagger \h^T\u^T - \dfrac{\beta}{\beta_2}\p_2\q_2^T$, where
		 $\p_2= - (\dfrac{\parallel \u \parallel^ 2}{\beta}A^\dagger \h^T + \k ), \q_2^T= - (\dfrac{\parallel \h \parallel^ 2}{\beta}\u^T + \h )$, 
		 and $\beta_2= \parallel \h \parallel ^2 \parallel \u \parallel ^2 + \mid \beta\mid ^2$.
		\item If $\u=0 , \v=0$ and $\beta = 0$, then 
		$(A + \c\d^T)^\dagger = A^\dagger - \k\k^\dagger A^{\dagger} - A^\dagger \h^\dagger\h + (\k^\dagger A^\dagger \h^\dagger)\k\h$.
	\end{enumerate}
\end{theorem}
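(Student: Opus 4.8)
The plan is to prove the formula in each of the six cases by the standard device of verifying Penrose's four identities: for a matrix $M$, a matrix $X$ equals $M^\dagger$ precisely when $MXM=M$, $XMX=X$, $(MX)^T=MX$, and $(XM)^T=XM$, and since the pseudo-inverse is unique, each case reduces to a finite (if long) computation. Before starting I would record the identities that do all the work. Write $M=A+\mathbf{c}\mathbf{d}^T$, let $AA^\dagger$ and $A^\dagger A$ be the orthogonal projections onto $\Range(A)$ and $\Range(A^T)$, and use the splittings $\mathbf{c}=A\mathbf{k}+\mathbf{u}$, $\mathbf{d}^T=\mathbf{h}A+\mathbf{v}$ together with the orthogonality relations $\mathbf{u}^TA=0$, $A^\dagger\mathbf{u}=0$, $AA^\dagger\mathbf{u}=0$, $\mathbf{v}A^\dagger=0$, $\mathbf{v}A^\dagger A=0$, $\mathbf{h}\mathbf{u}=0$, $\mathbf{v}\mathbf{k}=0$, and $AA^\dagger A=A$, $A^\dagger AA^\dagger=A^\dagger$. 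These give the expansion $\mathbf{c}\mathbf{d}^T=A\mathbf{k}\mathbf{h}A+A\mathbf{k}\mathbf{v}+\mathbf{u}\mathbf{h}A+\mathbf{u}\mathbf{v}$ and, together with $\mathbf{u}^\dagger\mathbf{u}=1$ and $\mathbf{v}\mathbf{v}^\dagger=1$ (scalars), they collapse almost every cross term that appears when one multiplies a candidate $X$ by $M$.

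First I would carry out Case (1), $\mathbf{u}\neq 0$ and $\mathbf{v}\neq 0$, in full, since it is the generic situation and the remaining cases are degenerations of it. With $X=A^\dagger-\mathbf{k}\mathbf{u}^\dagger-\mathbf{v}^\dagger\mathbf{h}+\beta\mathbf{v}^\dagger\mathbf{u}^\dagger$ and $\beta=1+\mathbf{h}A\mathbf{k}=1+\mathbf{d}^TA^\dagger\mathbf{c}$, I would compute $MX$, $XM$, $MXM$, and $XMX$ by direct expansion, discarding zero terms via the relations above; symmetry of $MX$ and $XM$ then falls out because the surviving summands are $AA^\dagger$, $\mathbf{u}\mathbf{u}^\dagger$, $A^\dagger A$, $\mathbf{v}^\dagger\mathbf{v}$ (each symmetric) plus terms that cancel in pairs, and $MXM=M$, $XMX=X$ reduce to repeated use of the projection identities. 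This settles the generic case.

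Next I would dispose of the degenerate cases. When $\mathbf{u}=0$ the column $\mathbf{c}=A\mathbf{k}$ already lies in $\Range(A)$; when $\mathbf{v}=0$, $\mathbf{d}^T=\mathbf{h}A$ lies in $\Range(A^T)$; and $\beta$ records whether the rank of $M$ equals or drops below that of $A$. Cases (2) ($\mathbf{u}=0,\mathbf{v}\neq0,\beta=0$), (3) ($\mathbf{u}=0,\beta\neq0$), and (6) ($\mathbf{u}=0,\mathbf{v}=0,\beta=0$) are handled by the same four-condition check, now carrying the extra rank-one corrections $\mathbf{k}\mathbf{k}^\dagger A^\dagger$, $\frac{\beta}{\sigma_1}\mathbf{p}_1\mathbf{q}_1^T$, and $(\mathbf{k}^\dagger A^\dagger\mathbf{h}^\dagger)\mathbf{k}\mathbf{h}$. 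Cases (4) and (5) ($\mathbf{v}=0$) come for free from (2) and (3): apply those to $A^T+\mathbf{d}\mathbf{c}^T$ and use $(M^T)^\dagger=(M^\dagger)^T$. I would also check that on the one overlap in the case list, $\mathbf{u}=\mathbf{v}=0$ and $\beta\neq0$ (covered by both (3) and (5)), the two formulas agree, so the list is exhaustive and consistent.

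The hard part will be the bookkeeping in cases (3) and (5): there the correction $-\frac{\beta}{\sigma_1}\mathbf{p}_1\mathbf{q}_1^T$ with $\sigma_1=\|\mathbf{k}\|^2\|\mathbf{v}\|^2+|\beta|^2$ is not killed by orthogonality, so verifying $XMX=X$ and the two symmetry conditions requires genuine scalar identities (for instance relating $\mathbf{q}_1^T\mathbf{p}_1$ to $\beta^2/\sigma_1$) and careful matching of the normalization constants rather than pure cancellation. Once that computation and the transpose reduction are in hand, the rest is routine.
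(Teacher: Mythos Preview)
The paper does not prove this theorem at all: it is quoted verbatim from Meyer's 1973 paper and introduced with ``We include this result here for convenience.'' So there is no proof in the paper to compare your proposal against.

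That said, your plan is the standard one and is essentially how Meyer himself argues: verify the four Penrose conditions case by case, using the orthogonality relations that come from the decompositions $\mathbf{c}=A\mathbf{k}+\mathbf{u}$ and $\mathbf{d}^T=\mathbf{h}A+\mathbf{v}$, and obtain the $\mathbf{v}=0$ cases from the $\mathbf{u}=0$ cases by transposition. Your list of auxiliary identities is correct and sufficient, and your remark that Cases (3) and (5) require genuine scalar bookkeeping (not just cancellation) is accurate. One small point: the theorem is stated over $\mathbb{C}$, so the symmetry conditions in the Penrose axioms should be Hermitian symmetry $(MX)^*=MX$, $(XM)^*=XM$, and the various transposes in your relations should be conjugate transposes; the paper's own notation is already sloppy on this, writing $^T$ throughout while working over $\mathbb{C}$, so you would be matching its conventions, but if you actually carry out the verification you will need $\overline{\beta}$ in places where the real case has $\beta$.
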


Note that $\mathbf{c}\in \text{R}(A)$ if and only if $\mathbf{u}=0$ and 
$\mathbf{d}\in \text{R}(A^T)$ if and only if $\mathbf{v}=0$. Note that, by Theorem  \ref{indp-thm2},
$\mathbf{F}_j$ is independent of the rest of columns of $A$ if and only if 
$\mathbf{v}=\textbf{e}_j^T(I-A^\dagger A)=0$. 

\begin{theorem}\label{indp-soln}
	Suppose that column $\mathbf{F}_j$ of $A$ is independent of the rest of columns of $A$. Let $\c\in \mathbb{C}^m$ such that $\beta= 1+\e_j^TA^\dagger \textbf{c}\neq 0$. Let
$\mathbf{x}=A^\dagger\mathbf{b}$ and 
$\tilde{\mathbf{x}}=(A+\mathbf{c}\mathbf{e}_j^T)^\dagger\mathbf{b}$. Then 
$|| \mathbf{x} - \tilde{\mathbf{x}}||\leq\dfrac{|x_j| || A^\dagger||  }{||\h||^2}$.
\end{theorem}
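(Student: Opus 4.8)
The plan is to recognize the perturbation as the rank-1 update handled by Meyer's formula (Theorem~\ref{psi}) and then estimate the resulting expression. Write $\tilde A = A + \c\,\e_j^T$ in the form $A + \c\d^T$ with $\d = \e_j$, so that in the notation preceding Theorem~\ref{psi} we have $\h = \e_j^T A^\dagger$ (the $j$-th row of $A^\dagger$), $\k = A^\dagger\c$, $\u = (I-AA^\dagger)\c$, and $\v = \e_j^T(I-A^\dagger A)$. Since $\F_j$ is independent of the remaining columns, item~(3) of Theorem~\ref{indp-thm2} gives $\v = \e_j^T(I - A^\dagger A) = 0$; together with the standing hypothesis $\beta \neq 0$ this places us in case~(5) of Theorem~\ref{psi}, so
\begin{align*}
\tilde A^\dagger = A^\dagger + \frac{1}{\beta}A^\dagger\h^T\u^T - \frac{\beta}{\beta_2}\p_2\q_2^T ,
\end{align*}
with $\p_2,\q_2,\beta_2$ as in that case.

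Next, I would apply both sides to $\b$ and subtract $\x = A^\dagger\b$. The computation is driven by three observations: $\h\b = \e_j^T A^\dagger\b = x_j$; the quantity $\u^T\b$ is a scalar that factors out of every term in which it appears; and $\beta_2 = \|\h\|^2\|\u\|^2 + |\beta|^2$, which lets one collapse the coefficient of $A^\dagger\h^T$ once the term $-\frac{1}{\beta}A^\dagger\h^T\u^T\b$ coming from $\tilde A^\dagger$ is combined with the matching piece of $\frac{\beta}{\beta_2}\p_2\q_2^T\b$ (using $\beta_2 - \|\h\|^2\|\u\|^2 = |\beta|^2$). After this bookkeeping, $\x - \tilde\x$ is a short linear combination of the two vectors $A^\dagger\h^T$ and $\k = A^\dagger\c$, with scalar coefficients assembled from $x_j$, $\u^T\b$, $\beta$, $\|\u\|$, and $\|\h\|$. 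A good sanity check to run first is the sub-case $\u = 0$, i.e. $\c\in\Range(A)$, where everything collapses to $\x - \tilde\x = (x_j/\beta)\,\k$.

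Finally, I would estimate. The factor $\|\h\|^{-2}$ in the target bound should come from $\|\u\|^2/\beta_2 \le \|\h\|^{-2}$ (immediate from $\beta_2 \ge \|\h\|^2\|\u\|^2$), applied to the coefficient of $A^\dagger\h^T$, together with $\|A^\dagger\h^T\| \le \|A^\dagger\|\,\|\h\|$, submultiplicativity for the $\k$-term, and the normalization $\h^\dagger = \h^T/\|\h\|^2$ relating $\h$ to $x_j = \h\b$; bounding each scalar coefficient by $|x_j|$ times a quantity controlled by $\|\h\|^{-2}$ and applying the triangle inequality gives $\|\x - \tilde\x\| \le |x_j|\,\|A^\dagger\|/\|\h\|^2$. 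I expect the main obstacle to be exactly this last step: extracting the stated constant rather than a larger multiple of it forces one to exploit the case-(5) cancellations \emph{before} estimating, and one should check whether the clean bound as written requires a mild extra normalization of $\c$ relative to $\beta$ — indeed the $\u = 0$ sub-case, where $\|\x - \tilde\x\| = |x_j|\,\|A^\dagger\c\|/|\beta|$, already shows that the size of $\c$ enters and must be reconciled with the right-hand side.
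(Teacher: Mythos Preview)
Your setup matches the paper exactly: $\v=0$ by Theorem~\ref{indp-thm2}, case~(5) of Theorem~\ref{psi} applies, and $\h\b=x_j$. The gap is a single missing observation that the paper uses immediately and that you never make: $\u^T\b=0$. The paper argues this by writing $\u^T\b=\c^T(I-AA^\dagger)\b$ and noting (via the SVD) that $(I-AA^\dagger)A=0$, so that $\u^T\b=0$ whenever $\b$ lies in the range of $A$; the paper tacitly uses $\b=A\x$ here. Once $\u^T\b=0$, the term $\tfrac{1}{\beta}A^\dagger\h^T\u^T\b$ vanishes outright and $\q_2^T\b$ collapses to $-\h\b=-x_j$, so that
\[
\x-\tilde\x \;=\; \frac{\beta}{\beta_2}\,\p_2\,\q_2^T\b \;=\; -\frac{\beta\,x_j}{\beta_2}\Bigl(\frac{\|\u\|^2}{\beta}A^\dagger\h^T+\k\Bigr),
\]
a single term whose norm the paper then estimates directly using $\|\u\|^2/\beta_2\le 1/\|\h\|^2$.

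Because you treat $\u^T\b$ as a generic nonzero scalar, you are forced into the ``bookkeeping'' and cancellation program you describe, and you never reach a closed form you can estimate cleanly; this is precisely why your final paragraph gets stuck and why your $\u=0$ sanity check appears to expose a $\|\c\|$-dependence that the stated bound does not allow. The paper's route sidesteps all of that with $\u^T\b=0$: there is no need to combine the $\tfrac{1}{\beta}A^\dagger\h^T\u^T\b$ term with pieces of $\tfrac{\beta}{\beta_2}\p_2\q_2^T\b$, and the $\u=0$ sub-case is not special. So the fix is not sharper estimation at the end but inserting this one identity at the start.
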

\begin{proof}
By Theorem \ref{indp-thm2}, we have $\mathbf{v}=0$. Note that    $\mathbf{u}^T\b=(I-(A^\dagger)^TA^T)A\x$. Using the SVD, we can see that $(I-(A^\dagger)^TA^T)A=0$. So, $\mathbf{u}^T\b=0$. Furthermore, 
\begin{align}\label{khb}
\mathbf{hb}=&\mathbf{e}_j^TA^\dagger \mathbf{b}=\mathbf{e}_j^T\mathbf{x}=x_j
\end{align}
Note that Part (5) of Theorem \ref{indp-thm2} applies. Hence, 
\begin{align*}
|| \mathbf{x} - \tilde{\mathbf{x}}||&=|| A^\dagger\mathbf{b} - (A+\mathbf{c}\mathbf{e}_j^T)^\dagger \mathbf{b} ||\\
&=||- \dfrac{\beta}{\beta_2} (\dfrac{\parallel \u \parallel^ 2}{\beta}
A^\dagger \h^T + \k )\h \b ||=\dfrac{\parallel \u \parallel^ 2}{\beta_2}||  A^\dagger \h^T \h\b +\k\h\b ||\\
&=\dfrac{|x_j| \parallel \u \parallel^ 2}{\beta_2}||  A^\dagger \h^T  +A^\dagger \c ||
\leq \dfrac{|x_j| \parallel \u \parallel^ 2}{||\u ||^2||\h||^2}||  A^\dagger \e_j^T A^\dagger  +A^\dagger \c ||
\leq \dfrac{|x_j| || A^\dagger||  }{||\h||^2}.
\end{align*}

Note that the condition $\beta \neq 0$ in  Theorem \ref{indp-soln} can be easily satisfied because we can choose $\c$ randomly and if $1+\e_j^TA^\dagger \textbf{c}= 0$ then we can simply rescale $\c$. Theorem \ref{indp-soln}  implies that if $|x_j|$ is very small then the least squares solutions of $A\x=\b$ are insensitive to   perturbations in $\F_j$. This way we can distinguish these columns as irrelevant.

\end{proof}

\begin{theorem}\label{pert}
	Let $A\in \mathbb{R}^{m\times n}$ be a matrix of rank $\rho<\min(m,n)$. Let $\mathbf{c}\in \mathbb{R}^n$ so that $\mathbf{c}\notin \text{R}(A)$. Let $\mathbf{x}$ and $\mathbf{\tilde x}$ be the least squares solutions to 
	$A\mathbf{x}=\mathbf{b}$ and $(A+\mathbf{c}\mathbf{e}_j^T)\mathbf{\tilde x}=\mathbf{b}$, respectively.
	
\begin{enumerate}
	\item 	 If $\F_i$ and  $\F_j$ are in different clusters then  $x_i=\tilde x_i$.	
	\item $\parallel \x-\mathbf{\tilde{x}} \parallel \leq 2|x_j| $. In particular, if $|x_j| $ is very small then $\x=\mathbf{\tilde{x}}$.
\end{enumerate}
\end{theorem}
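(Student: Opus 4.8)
The plan is to treat $\tilde A=A+\c\,\e_j^T$ as a rank-$1$ update of $A$ and to feed Meyer's formula (Theorem~\ref{psi}), taken with $\d=\e_j$, through the vector $\b$. For this choice the auxiliary quantities are $\u=(I-AA^\dagger)\c$, $\v=\e_j^T(I-A^\dagger A)$, $\h=\e_j^TA^\dagger$ (the $j$-th row of $A^\dagger$), $\k=A^\dagger\c$, and $\beta=1+\e_j^TA^\dagger\c$. First I would pin down which branch of Theorem~\ref{psi} applies: $\c\notin\Range(A)$ forces $\u\neq 0$, and, since $\F_j$ lies inside a (nontrivial) cluster, $\F_j$ is linearly dependent on the remaining columns, so $\v=\e_j^T(I-A^\dagger A)\neq 0$ by Theorem~\ref{indp-thm2}. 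Hence branch~(1) is the relevant one, giving $\tilde A^\dagger=A^\dagger-\k\u^{\dagger}-\v^{\dagger}\h+\beta\,\v^{\dagger}\u^{\dagger}$.

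Next I would multiply this identity by $\b$ and watch the ``dense'' correction terms disappear. Exactly as in the proof of Theorem~\ref{indp-soln} one has $\u^T\b=0$, hence $\u^{\dagger}\b=0$, which kills both $\k\u^{\dagger}\b$ and $\beta\,\v^{\dagger}\u^{\dagger}\b$; and $\h\b=\e_j^TA^\dagger\b=\e_j^T\x=x_j$. Therefore $\tilde\x=\x-x_j\,\v^{\dagger}$, i.e. $\x-\tilde\x=x_j\,\v^{\dagger}$. Now Lemma~\ref{SP} lets me rewrite $\v=\e_j^T(I-A^\dagger A)=\e_j^TP$, so $\v^T=P\e_j$ is the $j$-th column of $P$, whose $i$-th entry is $P_{i,j}$; and $\|\v\|^2=\e_j^TP^2\e_j=\e_j^TP\e_j=P_{j,j}$ because $P^2=P$. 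Substituting $\v^{\dagger}=\v^T/\|\v\|^2$ gives
\[
x_i-\tilde x_i=x_j\,\frac{P_{i,j}}{P_{j,j}}\qquad\text{for every }i .
\]

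Both assertions now follow. For part~(1): if $\F_i$ and $\F_j$ lie in different clusters then $P_{i,j}=0$ by Theorem~\ref{main1} (relabel so that the cluster containing $\F_j$ occupies the first columns), hence $x_i=\tilde x_i$. For part~(2): using $\sum_i P_{i,j}^2=\|P\e_j\|^2=\e_j^TP^2\e_j=P_{j,j}$ once more, $\|\x-\tilde\x\|^2=(x_j/P_{j,j})^2\sum_i P_{i,j}^2=x_j^2/P_{j,j}$, so $\|\x-\tilde\x\|=|x_j|/\sqrt{P_{j,j}}$; since $\F_j$ is dependent we have $P_{j,j}\neq 0$ (Theorem~\ref{indp-thm2}), so $\|\x-\tilde\x\|$ is a fixed multiple of $|x_j|$ — the explicit bound then comes from estimating that multiple with the bounds on the entries of $P$ (Lemmas~\ref{Pii1} and \ref{Pij2}), and in particular $\x=\tilde\x$ whenever $x_j=0$.

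I expect the only real work to be in the first two steps: correctly selecting branch~(1) of Theorem~\ref{psi}, and checking that $\b$ annihilates every term carrying the factor $\u^{\dagger}$ — this is precisely where the (implicit) consistency $\b\in\Range(A)$ is used, as in Theorem~\ref{indp-soln}. Once the clean identity $\x-\tilde\x=x_j\,\v^{\dagger}$ with $\v=\e_j^TP$ is in hand, everything else is the idempotence $P^2=P$ plus the block structure of $P$ already proved in Theorem~\ref{main1}. It is worth flagging the degenerate alternative: if $\F_j$ were independent of the other columns ($\v=0$), one would instead land in branch~(4) or (5) of Theorem~\ref{psi}, the difference $\x-\tilde\x$ would in general no longer be supported on a single cluster, and part~(1) could fail — which is exactly why the statement concerns a column sitting inside a genuine cluster.
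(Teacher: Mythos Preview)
Your treatment of part~(1) is essentially the paper's argument: select branch~(1) of Meyer's formula, kill the two terms carrying $\u^\dagger$ via $\u^T\b=0$, use $\h\b=x_j$, and read off $x_i-\tilde x_i$ as a multiple of the $i$-th entry of $\v^T=P\e_j$, which vanishes by Theorem~\ref{main1} when $\F_i$ lies in another cluster. The paper writes this with $\e_i^T\v^T=0$; you write it as $P_{i,j}=0$; same thing.

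Part~(2) has a genuine gap. Your identity $\|\x-\tilde\x\|=|x_j|/\sqrt{P_{j,j}}$ is the correct consequence of $\x-\tilde\x=x_j\,\v^\dagger$ together with $\|\v\|^2=P_{j,j}$, but the appeal to Lemmas~\ref{Pii1} and~\ref{Pij2} does not yield the constant~$2$. Lemma~\ref{Pii1} gives $P_{j,j}<1$, hence $1/\sqrt{P_{j,j}}>1$ --- the inequality points the wrong way --- and Lemma~\ref{Pij2} bounds $|P_{i,j}|$ from above, again of no help in bounding $P_{j,j}$ from below. Concretely, in Example~\ref{ex2} one has $P_{7,7}\approx 0.04$, so $1/\sqrt{P_{7,7}}\approx 5>2$. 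The paper obtains the factor~$2$ by bounding $\|\v\|\le\|I-A^\dagger A\|\le 2$, but it does so after writing $\Delta\x=\v^T\h\b$ rather than $\v^\dagger\h\b$ --- i.e.\ treating the correction as $x_j\v^T$ instead of $x_j\v^\dagger$. Your more careful use of the pseudoinverse is arguably the right computation, but it shows that the stated constant~$2$ is not reachable from the lemmas you invoke; what you \emph{can} conclude is that $\|\x-\tilde\x\|$ is a fixed, $P_{j,j}$-dependent multiple of $|x_j|$, and in particular $\x=\tilde\x$ when $x_j=0$.
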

\begin{proof}
	Without loss of generality we assume that $j=1$ and that $\F_1, \ldots, \F_t$ ($t\geq 1$) is the set of all columns in the cluster of $\F_1$.
	Then 
	$$
	\mathbf{v}=\mathbf{e}_j^T(I-A^\dagger A)=		\left[\begin{array}{cccccccccccccccccccccccccccccccccccccccc} P_{1,1}  & \cdots & P_{1,t} & 0 & & \cdots & 0  \end{array}\right].
	$$
 So,  by Part (1) of Theorem \ref{indp-thm2},  we have
	\begin{align*}
	\mathbf{\tilde x}=(A+\mathbf{c}\mathbf{e}_j^T)^{\dagger} \mathbf{b}=& 
	A^\dagger \b  - \mathbf{k}\mathbf{u}^\dagger\b - \mathbf{v}^\dagger \mathbf{h}\b + \beta \mathbf{v}^\dagger \mathbf{u}^\dagger\b
	\end{align*}
Similarly as in the proof of Theorem \ref{indp-soln},  we  have $\mathbf{u}^T\b=0$ and $ \mathbf{hb}=x_j$.
	Note that $i>t$ and so $	\mathbf{e}_i^T \mathbf{v}^T=0$. Hence 
	
		\begin{align*}
	\tilde x_i -x_i	&= \mathbf{e}_i^T(- \mathbf{k}\mathbf{u}^T\b - \mathbf{v}^T \mathbf{h}\b + \beta \mathbf{v}^T \mathbf{u}^T\b)	=0.
	\end{align*}
Furthermore, $\parallel \Delta \x \parallel=\parallel \x-\mathbf{\tilde x}\parallel= \parallel \mathbf{v}^T \mathbf{h}\b \parallel=|x_j| \parallel \v^T\parallel$. Note that 
$\parallel \v^T\parallel=\parallel \textbf{e}_j^T(I-A^\dagger A) \parallel \leq \parallel I-A^\dagger A\parallel \leq \parallel I\parallel+\parallel A^\dagger A\parallel \leq2$.
Hence, $\parallel \Delta \x \parallel\leq 2|x_j| $.

\end{proof}

Note that the condition $\mathbf{c}\notin \text{R}(A)$ in  Theorem \ref{pert} is satisfied with probability 1 when $A$ is not full-column rank. Indeed, suppose $W$ is a subspace of dimension $n-1$ in $\mathbb{R}^n$.
Without loss of generality, we can assume $\e_1,\ldots, \e_{n-1}\in W$. Suppose we pick a random vector $\c\in \mathbb{R}^n$ and we want to see the probability that $\c\in W$. Clearly $\c\in W$ if and only if 
the $n$-th component of $\c$ is zero. However, since we are choosing $\c$ randomly, that probability is  zero.

We now sketch a feature selection algorithm based on Theorem \ref{pert}. First note that Part (2) of the theorem tells us if a feature is irrelevant ($|x_j|$ is very small ) then solutions of
	$A\mathbf{x}=\mathbf{b}$ and $(A+\mathbf{c}\mathbf{e}_j^T)\mathbf{\tilde x}=\mathbf{b}$ are the same. So, perturbing irrelevant columns does not alter the least square solutions. In fact we can detect all irrelevant features all at once. To do so, we choose a $m\times n$ matrix 
	$E$ such that each column of $E$ is not in the column space of $A$. Then we look at the solutions of $A\mathbf{x}=\mathbf{b}$ and  $(A+E)\mathbf{\tilde x}=\mathbf{b}$. Then a column $\F_i$ is irrelevant if and only if 
	$| x_i-\tilde{x}_i|$ is zero. Once we detect irrelevant columns, we update $A$ by  removing irrelevant columns.  Then we can detect correlations between columns using Part 1 of Theorem \ref{pert}. In light of feature selection algorithm that we offered in  Algorithms 3.1 and 3.2,  one might wonder why we are offering another algorithm based on Theorem \ref{pert}. In real datasets, 
	we do not have exact relations and one has to approximate by setting a threshold. For example, how do we set a threshold to remove irrelevant columns? Well, if the threshold is hard, we may omitting some important features that are just on the borderline.   Our idea is to  combine the two algorithms together and  set soft thresholds at each step. We shall pursue the details in a separate paper.

\section*{Acknowledgments}The research of the second author  was supported by NSERC of Canada under grant \# RGPIN 418201. I would like to thank Scott Maclachlan for reading an early draft of this paper and making valuable comments and suggestions.

\bibliographystyle{elsarticle-num}

\end{document}